\definecolor{color1}{HTML}{D0B22B}
\definecolor{dred}{RGB}{128,0,0}
\definecolor{colorhkust}{RGB}{20,43,140}
\definecolor{colorshanghaitech}{RGB}{162,0,5}
\definecolor{colortsinghua}{RGB}{116,52,129}
\definecolor{colordark}{RGB}{184,134,11}
\theoremstyle{definition}
\newtheorem{proposition}{Proposition}
\newtheorem{definition}{Definition}
\newcommand{\bs}[1]{{\bm{#1}}}
\begin{document}

\title{Federated Learning via Over-the-Air Computation}

\author{Kai~Yang,~\IEEEmembership{Student Member,~IEEE,} Tao ~Jiang,~\IEEEmembership{Student Member,~IEEE,} Yuanming~Shi,~\IEEEmembership{Member,~IEEE,}
        and~Zhi~Ding,~\IEEEmembership{Fellow,~IEEE}

\thanks{K. Yang is with the School of Information Science and Technology, ShanghaiTech University, Shanghai 201210, China, and also with the University of Chinese Academy of Sciences, Beijing 100049, China (e-mail: yangkai@shanghaitech.edu.cn).}
\thanks{T. Jiang and Y. Shi are with the School of Information Science and Technology, ShanghaiTech University, Shanghai 201210, China (e-mail: \{jiangtao1,shiym\}@shanghaitech.edu.cn).}
\thanks{Z. Ding is with the Department of Electrical and Computer Engineering,
University of California at Davis, Davis, CA 95616 USA (e-mail:
zding@ucdavis.edu).}
}
       
\maketitle
\IEEEpeerreviewmaketitle

\begin{abstract}
The stringent requirements for low-latency and privacy of the emerging  high-stake applications with intelligent
devices such as drones and smart vehicles make the cloud computing inapplicable in these scenarios. Instead,  \textit{edge machine learning} becomes increasingly attractive for performing training and inference directly at network edges without sending data to a centralized data center. This stimulates a nascent field termed as \textit{federated learning} for training a machine learning model on computation, storage, energy and bandwidth limited mobile devices in a distributed manner. To preserve data privacy and address the issues of unbalanced and non-IID data  points across different devices, the federated averaging algorithm has been proposed for global model aggregation by computing the weighted average of locally updated model at each selected device. However, the limited communication bandwidth becomes the main bottleneck for aggregating the locally computed updates. We thus propose a novel  \textit{over-the-air computation} based approach for fast global model aggregation via exploring the superposition property of a wireless multiple-access channel. This is achieved by joint  device
selection and beamforming design, which is modeled as a sparse and low-rank optimization problem to support efficient algorithms design. To achieve this goal, we provide  a difference-of-convex-functions (DC) representation for the sparse and low-rank function to enhance sparsity and accurately detect the fixed-rank constraint in the procedure of device selection. A DC algorithm is further developed to solve the resulting DC program with global convergence guarantees.  The algorithmic advantages and admirable performance of the proposed methodologies are demonstrated through extensive numerical results.   

\end{abstract}

\begin{IEEEkeywords}
Federated learning, over-the-air computation, edge machine learning, sparse optimization, low-rank optimization, difference-of-convex-functions, DC programming.
\end{IEEEkeywords}
\section{Introduction}
The astounding growth in data volume promotes widespread artificial intelligent applications such as image recognition and natural language processing \cite{lecun2015deep}, thanks to the recent breakthroughs in machine learning (ML) techniques particularly deep learning, as well as the unprecedented levels of computing power \cite{stoica2017berkeley}. Nowadays the typical machine learning procedure including the  training process and the inference process, is supported by the cloud computing, i.e., a centralized cloud data center with the broad accessibility of computation, storage and the whole dataset. However, the emerging intelligent mobile devices and high-stake applications such as drones, smart vehicles and augmented reality, call for the critical requirements of low-latency and privacy. This makes the cloud computing based ML methodologies inapplicable \cite{zhu2018towards}. Therefore, it becomes increasingly attractive to possess data locally at the edge devices and then performing training/inference directly at the edge, instead of sending data to the cloud or networks. This emerging technique  is termed as \textit{edge ML} \cite{park2018edgeai}. The main bottleneck is the limited computation, storage, energy and bandwidth resources to enable mobile edge intelligent services. To address this issue, there is a growing body of recent works to reduce the storage overhead, time and power consumption
in the inference process using the model compression methods via hardware and software co-design \cite{han2015deep,
Zhang_SPM18}. Furthermore, various advanced distributed optimization algorithms \cite{lin2017deep, wang2018edge, karakus2017straggler, mcmahan2017communication, smith_nips2017federated} have been proposed to speed up the training process by taking advantages of the computing power and distributed data over multiple
devices.  

Recently, a nascent field called \textit{federated learning} \cite{mcmahan2017communication, smith_nips2017federated,bonawitz2019towards,yang2019federated,tran2019federatedi} investigates the possibility of distributed learning directly on the mobile devices to enjoy the benefits of better privacy and less network bandwidth. However, a number of challenges arise to deploy the federated learning technique. 1) The collected non-IID data across the network (i.e.,
the data is generated by distinct distributions across different devices), imposes significant statistical challenges to fit a mode from the non-IID data \cite{smith_nips2017federated, zhao2018federated}. 2) Large communication loads across mobile devices limit the scalability for federated learning to efficiently exchange
locally computed updates at each device \cite{mcmahan2017communication, wang2017giant}. 3) The heterogeneity
of computation, storage and communication capabilities across different devices
brings unique system challenges to tame latency for on-device distributed training, e.g., the stragglers (i.e., devices that run slow) may cause significant delays \cite{wang2018edge, li2018near}. 4) The arbitrarily adversarial behaviors of the devices (e.g,. Byzantine failures \cite{blanchard2017machine})\ bring critical security
issues for large-scale distributed learning, which will incur a major degradation of the learning performance \cite{chen2017distributed}. 5) System implementation issues such as the unreliable device connectivity, interrupted execution and slow convergence compared with learning on centralized data \cite{bonawitz2019towards}. In particular, the federated averaging
(FedAvg)  algorithm \cite{mcmahan2017communication} turns out to be a promising way to efficiently average the locally
updated model at each device with unbalanced and non-IID data, thereby reducing the number of communication rounds between the center node and the end devices.        

In this paper, we focus on designing the fast model aggregation approach for the FedAvg algorithm to improve the communication efficiency  and speed up the federated
learning system. We observe that the global model aggregation procedure consists of the transmission of locally computed updates from each device, followed by the computation of their weighted average at a central node. We shall propose a computation and communication co-design approach for fast model aggregation by leveraging the principles of \textit{over-the-air computation} (AirComp) \cite{nazer2007computation}. This is achieved by exploring the superposition property of a wireless multiple-access channel to compute the desired function (i.e., the weighted average function) of distributed locally computed updates via concurrent transmission. Although the AirComp problem has achieved significant progresses from the point of view of information theory \cite{nazer2007computation}, signal processing \cite{Goldenbaum_TSP13harnessing} and  transceiver beamforming design \cite{Kaibin_IoT2018mimo}, the AirComp based model aggregation problem brings unique challenges as we need to simultaneously minimize
the
function distortion and maximize the number of involved devices. This is based on the key observations that the aggregation errors may lead to a notable drop
of the prediction accuracy, while the convergence
of training can be accelerated with more involved devices \cite{mcmahan2017communication,wang2018cooperative}. To improve
the communication efficiency and statistical performance of federated learning, we shall propose a joint device selection and receiver beamforming design approach to find the maximum selected devices with the mean-square-error (MSE) requirement for fast model aggregation via AirComp. Note that the tradeoff of learning performance and aggregation error is also considered in the recent parallel work \cite{zhu2018low}, which quatifies the device population of the truncation-based approach for excluding the devices with deep fading channel.
 
However, the joint device selection and beamforming design problem is essentially a computationally difficult mixed combinatorial optimization problem with nonconvex quadratic constraints. Specifically, device selection needs to maximize a combinatorial objective function, while the MSE requirement yields nonconvex quadratic constraints due to the multicasting duality for receiver beamforming design in AirComp \cite{Kaibin_IoT2018mimo}. To address the computational issue, we propose a sparse and low-rank modeling approach to assist efficient algorithms design. This is achieved by finding a sparse representation for the combinatorial objective function, followed by reformulating the nonconvex quadratic constraints as affine constraints with an additional rank-one matrix constraint by adopting the matrix lifting technique \cite{luo2007approximation}.
For the sparse optimization problem, $\ell_1$-norm is a celebrated convex surrogate for the nonconvex $\ell_0$-norm. The nonconvex smoothed $\ell_p$-norm supported by the iteratively reweighted algorithm is a promising way to enhance the sparsity level \cite{shi2016smoothed, Yuanming_cvxsmooth18}. However, its convergence results rely on the carefully chosen smoothing parameter. Although the semidefinite relaxation (SDR) technique convexifies the  nonconvex quadratic constraints as a linear constraint via dropping the rank-one constraint in the lifting problem, the performance degenerates with large number of antennas as its weak capability of inducing low-rank structures \cite{chen2018uniform}.

To address the limitations of existing algorithms for solving the presented sparse and low-rank optimization problem, we propose a unified difference-of-convex-functions (DC) approach to induce both the sparsity and low-rank structures. Specifically, to enhance sparsity, we adopt a novel DC representation for the $\ell_0$-norm \cite{gotoh2017dc}, which is given by the difference of the $\ell_1$-norm
and the Ky Fan $k$-norm \cite{fan1951maximum}, i.e., sum of the largest $k$
absolute values. We also provide a DC representation for the rank-one constraint of the positive semidefinite
matrix by setting the difference between its trace norm and  spectral norm
as zero.  Based on the novel DC representations for the sparse function and low-rank constraint, we propose to induce the sparse structure in the first step as a guideline for the priority of selecting devices. In the second step, we solve a number of feasibility detection problems to find the maximum selected devices via accurately satisfying the rank-one constraint. Our proposed DC approach for enhancing sparsity is parameter free. The exact detection of the rank-one constraint is critical for accurately detecting the feasibility of nonconvex quadratic constraints in the procedure of device selection. Furthermore, the computationally efficient DC Algorithm (DC) with global convergence guarantee is developed by successively solving the convex relaxation of primal problem and dual problem of the DC program.
These algorithmic advantages make the proposed DC approach for sparse and low-rank optimization outperform state-of-the-art approaches considerably. 

\subsection{Contributions}

In this paper, we propose a novel over-the-air computation approach to enable fast global model aggregation for on-device distributed federated learning via harnessing the signal superposition property of a wireless multiple-access
channel. To improve the statistical learning
performance and the convergence rate for on-device distributed learning, we propose
to maximize the number of involved devices for global model aggregation
while satisfying the MSE requirement to reduce the model aggregation error.  This is achieved by joint device selection and beamforming design, which is further modeled as a sparse and low-rank optimization problem. A novel DC approach is developed to enhance sparsity and accurately detect rank-one constraint. The DC algorithm with established convergence rate is further developed via successively convex relaxation.

The main contributions of the paper are summarized as follows:
\begin{enumerate}
    \item We design a novel fast model aggregation approach for federated learning via exploiting signal superposition property of a wireless multiple-access
channel using the principles of over-the-air computation. This idea is achieved
by joint device selection and  beamforming design to improve the statistical learning performance.   
    \item A sparse and low-rank modeling approach is provided to support efficient algorithms design for the joint device selection and beamforming problem, which is essentially a highly intractable combinatorial optimization problem with nonconvex quadratic constraints.
    \item To address the limitations of existing algorithms for sparse and low-rank optimization, we propose a unified DC representation approach to induce both the sparse and low-rank structures. The proposed DC approach has the capability of accurately detecting the feasibility of nonconvex quadratic constraints, which is critical in the procedure of device selection.
    \item We further develop a DC algorithm for the presented nonconvex DC program via successive convex relaxation. The global convergence rate of the DC algorithm is further established  by rewriting the DC   function as the difference of strongly convex functions.
\end{enumerate}
The superiority of the proposed DC approach for accurately feasibility detection and device selection will be demonstrated through extensive numerical results. It turns out that our proposed approaches can achieve better prediction accuracy and faster convergence rate in the experiments of training support vector machine (SVM) classifier on CIFAR-10 dataset.

\subsection{Organization}
The remaining part of this work is organized as follows. Section II introduces the system model of on-device distributed federated learning and problem formulation for fast model aggregation. Section III presents a sparse and low-rank modeling approach for model aggregation. Section IV provides the DC representation framework for solving the sparse and low-rank optimization problem, while in Section V the DC Algorithm is developed and its convergence rate is also established. The performances of the proposed approaches and other state-of-the-art approaches are illustrated in Section VI. We conclude this work in Section VII.

\section{System Model and Problem Formulation}\setlength\arraycolsep{1.5pt}
In this section, the on-device distributed federated learning system is presented. Based on the principles of over-the-air computation, we propose a computation and communication co-design approach based on the principles of over-the-air computation for fast model aggregation of locally computed updates at each device to improve the global model.

\subsection{On-Device Distributed Federated Learning}
\label{fedlearning}
As an on-device distributed training system, federated learning keeps the training data at each device and learns a shared global model from distributed mobile devices. With this novel distributed learning paradigm, lots of benefits can be harnessed such as low-latency, low power consumption as well as preserving users' privacy \cite{mcmahan2017communication}. Fig. \ref{fig:FedAvg} illustrates the federated learning system with $M$ single-antenna mobile devices and one computing enabled base station (BS) equipped with $N$ antennas to support the following distributed machine learning task:
\setlength\arraycolsep{2pt}
\begin{equation}
        \mathop{\textrm{minimize}}_{\bs{z}\in\mathbb{R}^d}\quad f(\bs{z})=\frac{1}{T}\sum_{i=1}^T f_i(\bs{z}),
\end{equation}
where $\bm{z}$ is the model parameter vector to be optimized with dimension $d$ and $T$ is the total number of data points. This model is widely used in linear regression, logistic regression, support vector machine, as well as deep neural networks. Typically, each function $f_i$ is parameterized by $\ell(\bs{z};\bs{x}_i,y_i)$, where $\ell$ is a loss function with the input-output data pair as $(\bs{x}_i,y_i)$. Here, $\mathcal{D}=\{(\bs{x}_i,y_i):i=1,\cdots,T\}$ denotes the dataset involved in the training process. The local dataset at device $k$ is denoted as $\mathcal{D}_k\subseteq\mathcal{D}$.
\begin{figure}[h]
        \centering
        \includegraphics[width=0.85\columnwidth]{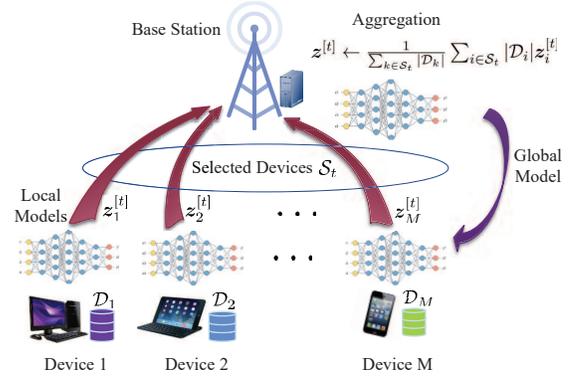}
        \caption{On-device distributed federated learning system.}
        \label{fig:FedAvg}
\end{figure}

A recognized problem for on-device distributed federated learning system is the limited network bandwidth, which becomes the main bottleneck for globally aggregating the locally  computed updates at each mobile device. To reduce the number of communication rounds between mobile devices and the BS for global model updating, the federated averaging (FedAvg) algorithm \cite{mcmahan2017communication} has recently been proposed, which is also referred to as model averaging. Specifically, at the $t$-th round:
\begin{enumerate}[1)]
    \item The BS selects a subset of mobile devices $\mathcal{S}_t\subseteq\{1,\cdots,M\}$;
    \item  The BS sends the updated global model $\bs{z}^{[t-1]}$ to the selected devices $\mathcal{S}_t$; 
    \item  Each selected device $k\in\mathcal{S}_t$ runs a local update algorithm (e.g., stochastic gradient algorithm) based on its local dataset $\mathcal{D}_k$ and the global model $\bs{z}^{[t-1]}$,  whose output is the updated local model $\bs{z}^{[t]}_{k}$; 
    \item The BS aggregates all the local updates $\bs{z}^{[t]}_{k}$ with $k\in\mathcal{S}_t$, i.e., computing their weighted average as the updated global model $\bs{z}^{[t]}$.
\end{enumerate}
The federated averaging framework is thus presented in Algorithm \ref{algorithm:FedAvg}.

 {\SetNlSty{textbf}{}{:}
\IncMargin{1em}
\begin{algorithm}[h]
\textbf{BS executes:} \\
 initialize $\bm{w}_0$.\\
 \For{each round $t=1,2,\cdots$}{
 $\mathcal{S}_t\leftarrow$ select a subset of $M$ devices; \\
 broadcast global model $\bs{z}^{[t-1]}$ to devices in $\mathcal{S}_t$. \\
 \For{each mobile device $k\in\mathcal{S}_t$ \textbf{in parallel}}{
 $\bs{z}^{[t]}_k\leftarrow$ LocalUpdate$(\mathcal{D}_k,\bs{z}^{[t-1]})$ 
 }
$\bs{z}^{[t]} \leftarrow \frac{1}{\sum_{k\in\mathcal{S}_t}|\mathcal{D}_k|}\sum_{k\in\mathcal{S}_t}|\mathcal{D}_k|\bs{z}^{[t]}_k$ (\bf{aggregation})
 }
 \caption{Federated Averaging (FedAvg) Algorithm}
 \label{algorithm:FedAvg}
\end{algorithm}}

In this paper, we aim at improving the communication efficiency for on-device distributed federated learning by developing a fast model aggregation approach for locally computed updates in the FedAvg algorithm. A key observation for the FedAvg algorithm is that the statistical learning performance can be improved by selecting more workers in each round \cite{mcmahan2017communication,wang2018cooperative}. As an illustrative example in Fig. \ref{fig:random_selection}, we train an support vector machine (SVM) classifier on the CIFAR-10 dataset \cite{krizhevsky2009learning} with FedAvg algorithm and show the training loss and prediction accuracy over the number of selected devices. The federated learning system consists of $10$ mobile devices in total and the selected devices are chosen uniformly at random for each round. 
However, selecting more devices also brings higher communication overhead for aggregating the local computed updates at each selected device. 
\begin{figure}[h]
        \centering
        \subfloat[Training loss]{\includegraphics[width=0.85\columnwidth]{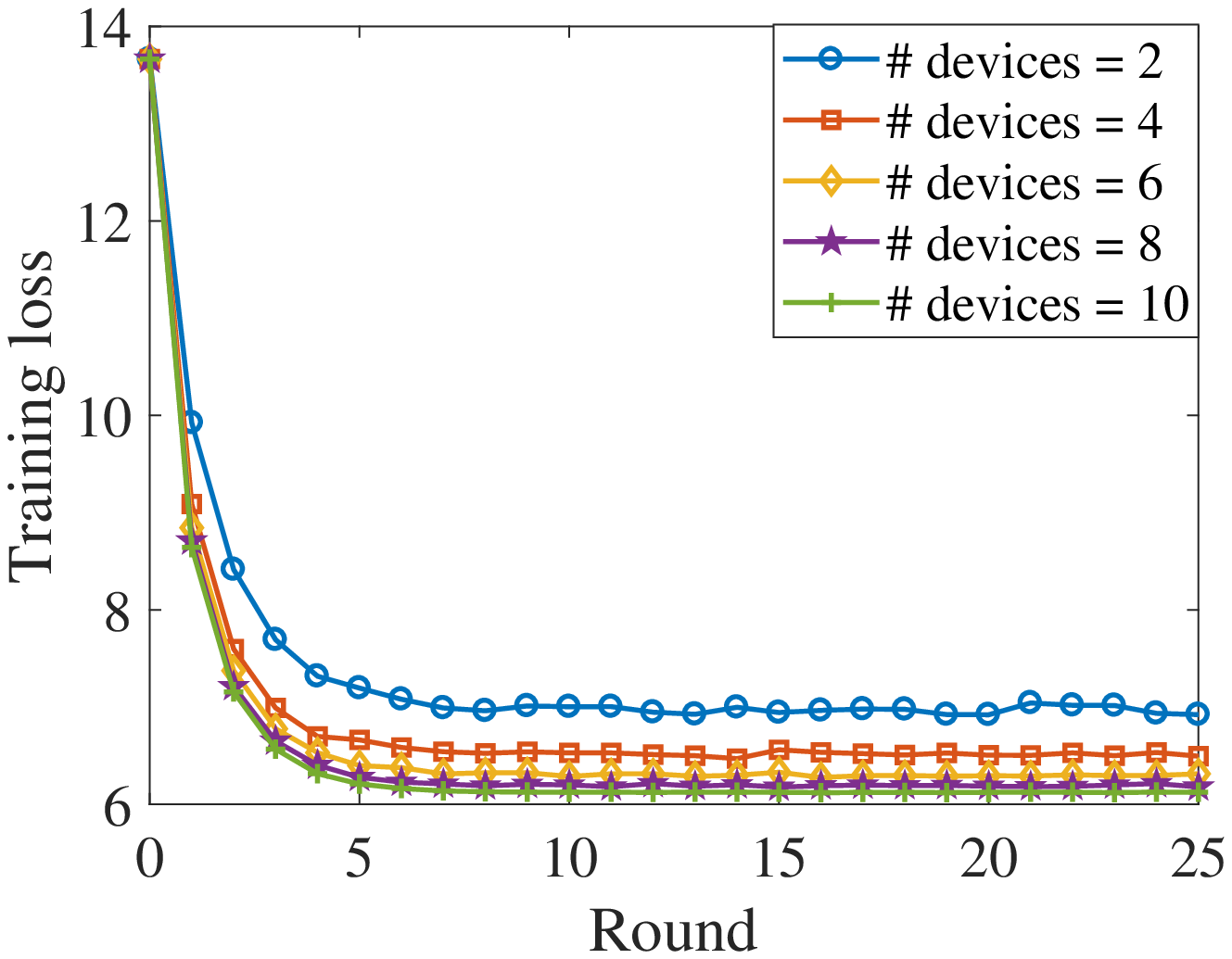}}\hfil
        \subfloat[Relative prediction accuracy]{\includegraphics[width=0.85\columnwidth]{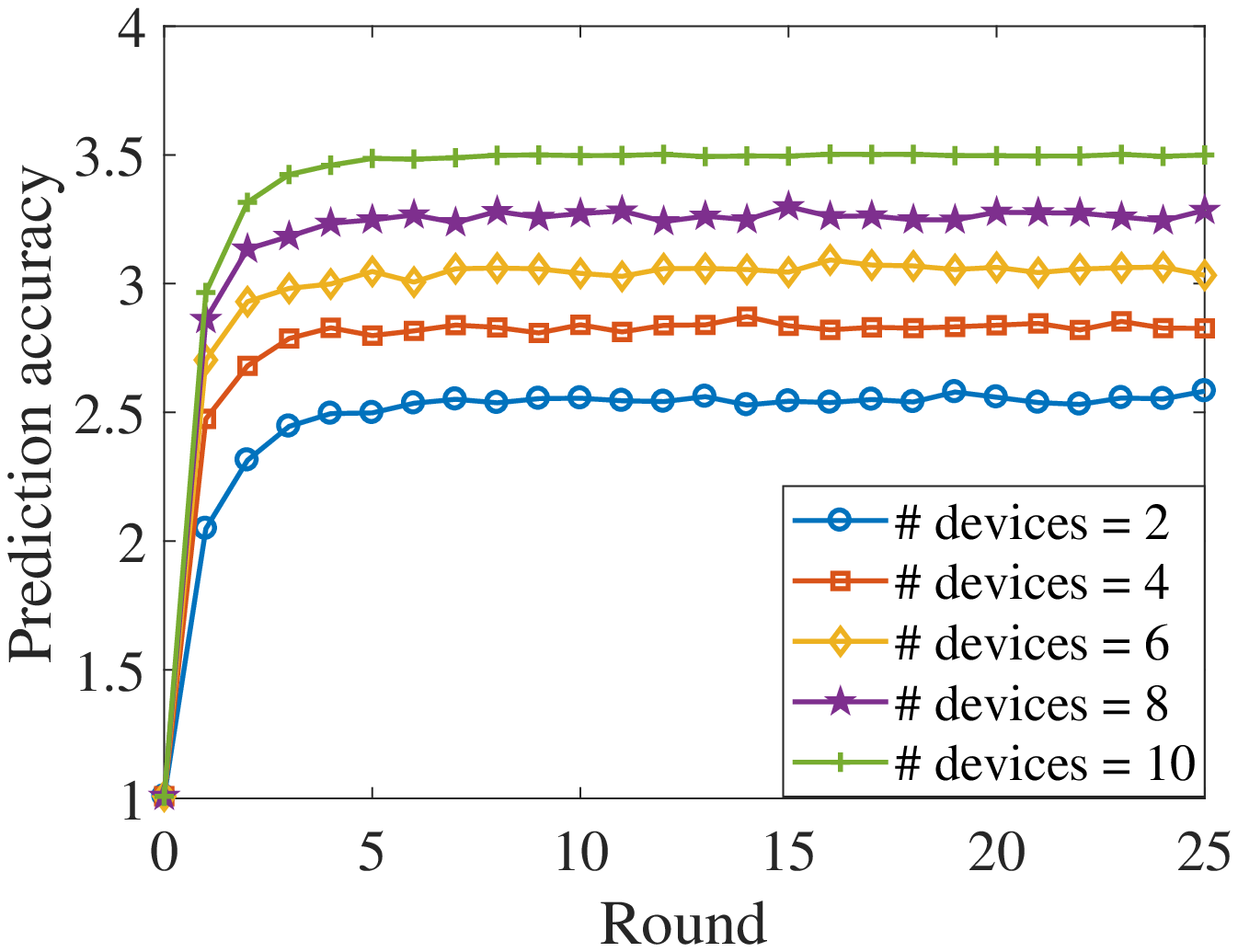}}
        \caption{The training loss and prediction accuracy with different number of randomly selected devices for FedAvg. We train an support vector machine (SVM) classifier on the CIFAR-10 dataset and adopt the stochastic gradient descent algorithm \cite{krizhevsky2009learning} as the local update algorithm for each device. Each curve is averaged for $10$ times. The relative prediction accuracy is defined as the accuracy over random classification.}
        \label{fig:random_selection}
\end{figure}

Note that the model aggregation procedure requires the computation of the weighted average of locally computed updates and the communication from selected mobile devices to the BS. Therefore, in this paper we develop a novel communication and computation co-design approach for fast model aggregation. Our approach is based on the principles of \emph{over-the-air computation} \cite{nazer2007computation} by leveraging the signal superposition property of a multiple-access channel. Furthermore, we notice that the aggregation error also causes a notable drop of the prediction accuracy  \cite{keskar2016large}. To address this issue, we shall develop efficient transceiver strategies to minimize the distortion error for model aggregation via over-the-air computation. Based on the above key observations, in this paper, we focus on the following two aspects to improve the statistical learning performance in on-device distributed federated learning system:
\begin{itemize}
        \item Maximize the number of selected devices at each round to improve the convergence rate in the distributed training process;
        \item Minimize the model aggregation error to improve the prediction accuracy in the inference process.
\end{itemize}

\subsection{Over-the-Air Computation for Aggregation}
Over-the-air computation has become a promising approach for fast wireless data aggregation via computing a nomographic function (e.g., arithmetic mean) of distributed data from multiple transmitters \cite{Goldenbaum_TSP13harnessing}. By integrating computation and communication through exploiting the signal superposition property of a multiple-access channel, over-the-air computation can accomplish the computation of target function via concurrent transmission, thereby significantly improving the communication efficiency compared with orthogonal transmission. The key observation in the FedAvg algorithm  is that the global model is updated through computing the weighted average of locally computed updates at each selected device, which falls in the category of computing nomographic functions of distributed data. In this paper, we shall propose the over-the-air computation approach for communication efficient aggregation in federated learning system.

Specifically, the target vector for aggregating local updates in the FedAvg algorithm is given by 
\begin{eqnarray}\label{eq:target_function}
\bs{z} = \psi\left(\sum_{i\in\mathcal{S}}\phi_i(\bs{z}_i)\right),
\end{eqnarray}
where $\bs{z}_i$ is the updated local model at the $i$-th device, $\phi_i=|\mathcal{D}_i|$ is the pre-processing scalar at device $i$, $\psi=\frac{1}{\sum_{k\in\mathcal{S}}|\mathcal{D}_k|}$ is the post-processing scalar at the BS, and $\mathcal{S}$ is the selected set of mobile devices. The symbol vector for each local model before pre-processing $\bs{s}_i := \bs{z}_i\in\mathbb{C}^{d}$ is assumed to be normalized with unit variance, i.e., $\mathbb{E}(\bs{s}_i\bs{s}_i^{\sf{H}})=\bs{I}$. At each time slot $j\in\{1,\cdots, d\}$, each device sends the signal $s_{i}^{(j)}\in\mathbb{C}$ to the BS. We denote
\begin{equation}\label{eq:target_function}
    g^{(j)}=\sum_{i\in\mathcal{S}}\phi_i\Big(s_i^{(j)}\Big)
\end{equation}
 as the target function to be estimated through over-the-air computation at the $j$-th time slot. 

 To simplify the notation, we omit the time index by writing $g^{(j)}$ and $s_i^{(j)}$ as $g$ and $s_i$, respectively. The received signal at the BS is given by
\begin{eqnarray}
\bm{y}=\sum_{i\in\mathcal{S}}\bm{h}_i{b}_is_i+\bm{n},
\end{eqnarray}
where $b_i\in\mathbb{C}$ is the transmitter scalar, $\bm{h}_i\in\mathbb{C}^N$ is the channel vector between device $i$ and the BS, and $\bm{n}\sim\mathcal{CN}(\bm{0}, \sigma^2\bm{I})$ is the noise vector. The transmit power constraint at device $i$ is given by
\begin{equation}\label{eq:transmit_power}
     \mathbb{E}(|b_is_i|^2)=|{b}_i|^2\le P_0
 \end{equation}
with $P_0>0$ as the maximum transmit power. The estimated value before post-processing at the BS is given as
\begin{eqnarray}
\hat{g}=\frac{1}{\sqrt{\eta}}{\bm{m}}^{\sf{H}}\bm{y}=\frac{1}{\sqrt{\eta}}{\bm{m}}^{\sf{H}}\sum_{i\in\mathcal{S}}\bm{h}_i{b}_is_i+\frac{\bm{m}^{\sf{H}}\bm{n}}{\sqrt{\eta}},
\end{eqnarray} 
where $\bm{m}\in\mathbb{C}^N$ is the receiver beamforming vector and $\eta$ is a normalizing factor. Each element of the target vector can thus be obtained as $\hat{z}=\psi(\hat{g})$ at the BS.

The distortion of $\hat{g}$ with respect to the target value $g$ given in equation (\ref{eq:target_function}), which quantifies the over-the-air computation performance for global model aggregation in the FedAvg algorithm, is measured by the mean-squared-error (MSE) defined as
\begin{align}
{\sf{MSE}}&(\hat{g}, g)=\mathbb{E}\left(|\hat{g}-g|^2\right)\nonumber \\&=\sum_{i\in\mathcal{S}}\Big|\bs{m}^{\sf{H}}\bs{h}_ib_i/\sqrt{\eta}-\phi_i\Big|^2+\sigma^2\|\bs{m}\|^2/\eta.
\end{align}
Motivated by \cite{chen2018uniform}, we have the following proposition for transmitter beamformers:
\begin{proposition}\label{prop:tx_beam}
    Given a receiver beamforming vector $\bm{m}$, the MSE is minimized by the following zero-forcing transmitter:
    \begin{equation}\label{eq:trans_scalar}
    b_i=\sqrt{\eta}\phi_i\frac{(\bm{m}^{\sf{H}}\bm{h}_i)^{\sf{H}}}{\|\bm{m}^{\sf{H}}\bm{h}_i\|^2}.
    \end{equation}
\end{proposition}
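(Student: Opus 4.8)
The plan is to exploit the \emph{separable} structure of the MSE across the transmitter scalars $\{b_i\}_{i\in\mathcal{S}}$. In
\begin{equation*}
{\sf{MSE}}(\hat{g}, g)=\sum_{i\in\mathcal{S}}\Big|\bs{m}^{\sf{H}}\bs{h}_ib_i/\sqrt{\eta}-\phi_i\Big|^2+\sigma^2\|\bs{m}\|^2/\eta,
\end{equation*}
the noise term $\sigma^2\|\bs{m}\|^2/\eta$ does not depend on any $b_i$, and the $i$-th summand of the first term involves only $b_i$. Hence, for a fixed receiver beamforming vector $\bs{m}$ (and fixed $\eta$), minimizing the MSE over $\{b_i\}$ decouples into $|\mathcal{S}|$ independent scalar problems.

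First I would fix a device $i$ and minimize the nonnegative quantity $\big|\bs{m}^{\sf{H}}\bs{h}_ib_i/\sqrt{\eta}-\phi_i\big|^2$ over $b_i\in\mathbb{C}$. Since $\bs{m}^{\sf{H}}\bs{h}_i$ is a nonzero scalar (a device for which $\bs{m}^{\sf{H}}\bs{h}_i=0$ can never contribute and is excluded from $\mathcal{S}$), this quantity attains its minimum value $0$ precisely when $\bs{m}^{\sf{H}}\bs{h}_ib_i/\sqrt{\eta}=\phi_i$, that is, when
\begin{equation*}
b_i=\sqrt{\eta}\,\phi_i\,\frac{1}{\bs{m}^{\sf{H}}\bs{h}_i}=\sqrt{\eta}\,\phi_i\,\frac{(\bs{m}^{\sf{H}}\bs{h}_i)^{\sf{H}}}{\|\bs{m}^{\sf{H}}\bs{h}_i\|^2},
\end{equation*}
which is exactly the claimed zero-forcing transmitter~\eqref{eq:trans_scalar}. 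Assembling the $|\mathcal{S}|$ scalar minimizers then yields ${\sf{MSE}}(\hat{g},g)\ge\sigma^2\|\bs{m}\|^2/\eta$ for every admissible choice of $\{b_i\}$, with equality achieved by~\eqref{eq:trans_scalar}; this proves the claim and, as a by-product, identifies the minimal residual distortion as $\sigma^2\|\bs{m}\|^2/\eta$, which will become the objective in the subsequent receiver beamforming design.

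The only step requiring a little care is feasibility with respect to the transmit-power constraint $|b_i|^2\le P_0$ in~\eqref{eq:transmit_power}: the zero-forcing choice gives $|b_i|^2=\eta\,\phi_i^2/\|\bs{m}^{\sf{H}}\bs{h}_i\|^2$, so it is admissible as long as the normalizing factor obeys $\eta\le P_0\min_{i\in\mathcal{S}}\|\bs{m}^{\sf{H}}\bs{h}_i\|^2/\phi_i^2$, which can always be arranged because $\eta$ is a free design parameter. I expect this power-budget bookkeeping — rather than the optimization argument, which is just the elementary observation that a squared modulus is minimized by annihilating its argument — to be the only mildly delicate point; everything else follows immediately from the separability of the MSE.
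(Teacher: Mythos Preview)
Your argument is correct and follows essentially the same route as the paper's own proof: observe that the noise term $\sigma^2\|\bs m\|^2/\eta$ is a lower bound on the MSE that does not depend on $\{b_i\}$, and that the zero-forcing choice annihilates the remaining nonnegative sum, thereby attaining that bound. Your presentation is in fact a bit more explicit (spelling out the separability across devices and the power-constraint bookkeeping for $\eta$), but the underlying idea is identical.
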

\begin{proof}
    See Appendix \ref{appd:tx_beam}.
\end{proof}

Due to the transmit power constraint (\ref{eq:transmit_power}) for transmit scalar $b_i$ given in (\ref{eq:trans_scalar}), we have
\begin{equation}
    \eta= \min_{i\in\mathcal{S}}\frac{P_0\|\bm{m}^{\sf{H}}\bm{h}_i\|^2}{\phi_i^2}.
\end{equation}
The MSE is thus given as
\begin{eqnarray}
{\sf{MSE}}(\hat{g}, g;\mathcal{S}, \bm{m})=\frac{\|\bm{m}\|^2\sigma^2}{\eta}=\frac{\sigma^2}{P_0}\max_{i\in\mathcal{S}}\phi_i^2 \frac{\|\bm{m}\|^2}{\|\bm{m}^{\sf{H}}\bm{h}_i\|^2}.
\end{eqnarray}



\subsection{Problem Formulation}
As discussed in Section {\ref{fedlearning}}, the number of selected devices shall be maximized to improve the learning performance for distributed federated learning. In addition, the aggregation error through over-the-air computation is supposed to be reduced to avoid the notable drop of model prediction accuracy. In this paper, we propose to find the maximum selected devices while guaranteeing the MSE requirement for over-the-air computation. It is formulated as the following mixed combinatorial optimization problem
\begin{eqnarray}
\mathop {\textrm{maximize}}_{\mathcal{S}, \bm{m}\in\mathbb{C}^N}&&|\mathcal{S}|\nonumber\\
\textrm{subject to}&&\left(\max_{i\in\mathcal{S}} \phi_i^2
\frac{\|\bm{m}\|^2}{\|\bm{m}^{\sf{H}}\bm{h}_i\|^2}\right)\le \gamma, \label{eq:form1}
\end{eqnarray}
where $\gamma>0$ is the MSE requirement for model aggregation. However, the mixed combinatorial optimization problem (\ref{eq:form1}) is highly intractable due to the combinatorial objective function $|\mathcal{S}|$ and the nonconvex MSE constraint with coupled combinatorial variable $\mathcal{S}$ and continuous variable $\bm{m}$. To address the nonconvexity of MSE function, \cite{chen2018uniform} finds the connections between the nonconvex MSE constraint (\ref{eq:form1}) and the nonconvex quadratic constraints for efficient algorithm designing. Enlightened by this observation, we will show that problem (\ref{eq:form1}) can be equivalently solved by maximizing the number of feasible nonconvex quadratic constraints. Specifically, to support efficient algorithms design, we shall propose a sparse representation  approach to find the maximum number of involved devices, followed by reformulating the nonconvex quadratic constraints as affine constraints with an additional rank-one  constraint by the matrix lifting technique. 

\section{Sparse and Low-Rank Optimization for On-Device Distributed Federated Learning}\label{sec:sparse_and_low_rank}
In this section, we propose a sparse and low-rank optimization modeling approach for on-device distributed federated learning with device selection. 

\subsection{Sparse and Low-Rank Optimization}
To support efficient algorithms design, we first rewrite problem (\ref{eq:form1})  as the mixed combinatorial optimization problem with nonconvex quadratic constraints as presented in Proposition \ref{prop:reform1}.
\begin{proposition}\label{prop:reform1}
        Problem \eqref{eq:form1} is equivalent to the following mixed combinatorial optimization problem:
\begin{eqnarray}
\mathop{\textrm{maximize}}_{\mathcal{S}, \bm{m}\in\mathbb{C}^N}&& |\mathcal{S}| \nonumber\\
\textrm{subject to}&&  \|\bm{m}\|^2-\gamma_i{\|\bm{m}^{\sf{H}}\bm{h}_i\|^2} \le 0, i\in\mathcal{S}, \nonumber\\
&& \|\bm{m}\|^2\geq 1, \label{eq:card1}
\end{eqnarray}
where $\gamma_i = \gamma/\phi_i^2$. That is, our target becomes maximizing the number of feasible MSE constraints $\|\bm{m}\|^2-\gamma_i{\|\bm{m}^{\sf{H}}\bm{h}_i\|^2} \le 0$ under the regularity condition $\|\bm{m}\|^2\geq 1$.
\end{proposition}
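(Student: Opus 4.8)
The plan is to prove the equivalence by showing that the two problems have exactly the same collection of \emph{achievable} device sets: a subset $\mathcal{S}\subseteq\{1,\dots,M\}$ admits a beamformer $\bm{m}$ making $(\mathcal{S},\bm{m})$ feasible for \eqref{eq:form1} if and only if it admits one making $(\mathcal{S},\bm{m})$ feasible for \eqref{eq:card1}. Since both problems maximize the same objective $|\mathcal{S}|$ over their respective feasible pairs, this equality of achievable sets immediately gives equality of the optimal values (and of the optimal selections). The correspondence is built in two moves: first rewriting the single $\max$-ratio constraint of \eqref{eq:form1} as the finite system of quadratic constraints appearing in \eqref{eq:card1}, and then invoking the scale invariance of the problem to justify the regularity condition $\|\bm{m}\|^2\ge 1$.

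For the first move, I would note that, by definition of the maximum over the finite set $\mathcal{S}$, the constraint $\max_{i\in\mathcal{S}}\phi_i^2\|\bm{m}\|^2/\|\bm{m}^{\sf{H}}\bm{h}_i\|^2\le\gamma$ is equivalent to requiring $\phi_i^2\|\bm{m}\|^2/\|\bm{m}^{\sf{H}}\bm{h}_i\|^2\le\gamma$ for every $i\in\mathcal{S}$. Well-definedness of these ratios forces $\bm{m}^{\sf{H}}\bm{h}_i\ne 0$ for all $i\in\mathcal{S}$, hence $\bm{m}\ne\bm{0}$ when $\mathcal{S}\ne\varnothing$. Multiplying through by $\|\bm{m}^{\sf{H}}\bm{h}_i\|^2/\phi_i^2>0$ and writing $\gamma_i=\gamma/\phi_i^2$ turns each inequality into $\|\bm{m}\|^2-\gamma_i\|\bm{m}^{\sf{H}}\bm{h}_i\|^2\le 0$. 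This step is reversible: if $\bm{m}\ne\bm{0}$ and $\|\bm{m}\|^2-\gamma_i\|\bm{m}^{\sf{H}}\bm{h}_i\|^2\le 0$, then necessarily $\bm{m}^{\sf{H}}\bm{h}_i\ne 0$ (otherwise $\|\bm{m}\|^2\le 0$, impossible), so one may divide back to recover the ratio form.

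For the second move, I would observe that the objective $|\mathcal{S}|$ does not involve $\bm{m}$, while the ratio constraint of \eqref{eq:form1} and the quadratic constraints $\|\bm{m}\|^2-\gamma_i\|\bm{m}^{\sf{H}}\bm{h}_i\|^2\le 0$ are invariant under the rescaling $\bm{m}\mapsto c\bm{m}$ for any scalar $c\ne 0$ (the ratio because its numerator and denominator both scale by $|c|^2$, the quadratic forms because they are homogeneous of degree two so their sign is preserved). Hence, given any feasible $(\mathcal{S},\bm{m})$ of \eqref{eq:form1} we have $\bm{m}\ne\bm{0}$, and $(\mathcal{S},\bm{m}/\|\bm{m}\|)$ is feasible for \eqref{eq:card1} with the same objective; conversely, any feasible $(\mathcal{S},\bm{m})$ of \eqref{eq:card1} has $\bm{m}\ne\bm{0}$ because $\|\bm{m}\|^2\ge 1$, so by the reversibility noted above it is feasible for \eqref{eq:form1} with the same objective. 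This yields the claimed correspondence between achievable sets and finishes the argument.

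I expect the only real care to be in the bookkeeping of the degenerate cases rather than in any deep step: one must verify that the passage between the ratio form and the quadratic form is valid in \emph{both} directions, which means tracking precisely when $\bm{m}^{\sf{H}}\bm{h}_i$ or $\bm{m}$ can vanish, and recognizing that $\|\bm{m}\|^2\ge 1$ in \eqref{eq:card1} is not an additional restriction but exactly the normalization that removes the degree-two homogeneity of the feasibility problem — any fixed positive lower bound, or the equality $\|\bm{m}\|^2=1$, would serve just as well.
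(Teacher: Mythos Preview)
Your proposal is correct and follows essentially the same approach as the paper: both arguments first unravel the single $\max$-ratio constraint into the family of quadratic inequalities $\|\bm{m}\|^2-\gamma_i\|\bm{m}^{\sf{H}}\bm{h}_i\|^2\le 0$, and then exploit the degree-two homogeneity of these inequalities to replace the nondegeneracy condition $\bm{m}\ne\bm{0}$ by the normalization $\|\bm{m}\|^2\ge 1$. The only cosmetic difference is that the paper introduces an auxiliary scalar $\tau>0$ and the substitution $\tilde{\bm{m}}=\bm{m}/\sqrt{\tau}$, whereas you normalize directly by $\|\bm{m}\|$; your treatment of the degenerate cases ($\bm{m}=\bm{0}$, $\bm{m}^{\sf{H}}\bm{h}_i=0$) is in fact more explicit than the paper's.
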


\begin{proof}
        Problem \eqref{eq:form1} can be reformulated as
        \begin{eqnarray}
        \mathop {\textrm{maximize}}_{\mathcal{S}, \bm{m}\in\mathbb{C}^N}&&|\mathcal{S}|\nonumber\\
        \textrm{subject to}&&F_i(\bm{m})=\|\bm{m}\|^2-\gamma_i{\|\bm{m}^{\sf{H}}\bm{h}_i\|^2} \le 0, i\in\mathcal{S} \nonumber\\
&& \bm{m}\ne \bm{0},
        \end{eqnarray}
which is further equivalently rewritten as
\begin{eqnarray}
        \mathop {\textrm{maximize}}_{\mathcal{S}, \bm{m}\in\mathbb{C}^N}&&|\mathcal{S}|\nonumber\\
        \textrm{subject to}&&F_i(\bm{m})/\tau=\|\bm{m}\|^2/\tau-\gamma_i{\|\bm{m}^{\sf{H}}\bm{h}_i\|^2}/\tau \le 0, i\in\mathcal{S} \nonumber\\
&& \|\bm{m}\|^2 \geq \tau, \tau>0. \label{prob:form_equivalent}
        \end{eqnarray}
Then by introducing variable $\tilde{\bm{m}}=\bm{m}/\sqrt{\tau}$, problem (\ref{prob:form_equivalent}) can be reformulated as
\begin{eqnarray}
\mathop{\textrm{maximize}}_{\mathcal{S}, \tilde{\bm{m}}\in\mathbb{C}^N}&& |\mathcal{S}| \nonumber\\
\textrm{subject to}&&  F_i(\tilde{\bm{m}})=\|\tilde{\bm{m}}\|^2-\gamma_i{\|\tilde{\bm{m}}^{\sf{H}}\bm{h}_i\|^2} \le 0,\quad i\in\mathcal{S}, \nonumber\\
&& \|\tilde{\bm{m}}\|^2\geq 1.
\end{eqnarray}
Therefore, problem (\ref{eq:form1}) is equivalent to problem (\ref{eq:card1}), where the regularity condition $\|\bm{m}\|^2\geq 1$ serves the purpose of avoiding the singularity (i.e., $\bm{m}=0$).
\end{proof}

To maximize the number of feasible MSE constraints in problem \eqref{eq:card1}, we can minimize the number of nonzero $x_k$'s \cite{shi2016smoothed}, i.e.,
\begin{eqnarray}
\mathop{\textrm{minimize}}_{\bm{x}\in\mathbb{R}_+^M, \bm{m}\in\mathbb{C}^N}&&\|\bm{x}\|_0 \nonumber\\
\textrm{subject to}&&\|\bm{m}\|^2-\gamma_i{\|\bm{m}^{\sf{H}}\bm{h}_i\|^2}\le x_i, \forall i,  \nonumber\\
&& \|\bm{m}\|^2\geq 1. \label{eq:form3}
\end{eqnarray}
The sparsity structure of $\bs{x}$ indicates the feasibility of each mobile device. If $x_i=0$, the $i$-th mobile device can be selected while satisfying the MSE requirement. 

However, both the MSE constraints and the regularity condition in problem (\ref{eq:form3}) are nonconvex quadratic constraints. To addressed this nonconvexity issue, a natural way is adopting the matrix lifting technique \cite{sidiropoulos2006transmit}. Specifically, by lifting vector $\bs{m}$ as the positive semidefinite (PSD) matrix $\bs{M}=\bs{m}\bs{m}^{\sf{H}}$ with $\textrm{rank}(\bs{M})=1$, problem (\ref{eq:form3}) can be reformulated as the following sparse and low-rank optimization problem
\begin{eqnarray}\label{eq:PSD}
\hspace{-1em}\mathscr{P}:\hspace{-1em}\mathop{\textrm{minimize}}_{\bm{x}\in\mathbb{R}_+^M, \bm{M}\in\mathbb{C}^{N\times N}}&&\|\bm{x}\|_0 \nonumber\\
\textrm{subject to~~~~}&&\textrm{Tr}(\bm{M})-\gamma_i\bm{h}_i^{\sf{H}}\bm{M}\bm{h}_i\le x_i, \forall i, \nonumber\\
&&\bm{M}\succeq\bs{0}, \textrm{Tr}(\bs{M})\geq 1, \nonumber \\
&& \textrm{rank}(\bs{M})=1.
\end{eqnarray}
Although problem $\mathscr{P}$ is still nonconvex, we shall demonstrate its algorithmic advantages by developing efficient algorithms.

\subsection{Problem Analysis}
Problem $\mathscr{P}$ is a nonconvex optimization problem with sparse objective function and low-rank constraint. Sparse optimization and low-rank optimization have attracted much attention in machine learning, signal processing, high-dimensional statistics, as well as wireless communication \cite{Yuanming_TWC2014,tropp2010computational, Romberg_JSTSP16, shi2016low, Yuanming_ComMage18}. Although the sparse function and the low rank function are both nonconvex and computationally difficult, significant progress has been achieved for taming the nonconvexity via developing efficient and provable algorithms by exploiting various problem structures.

\subsubsection{Sparse Optimization}
$\ell_1$-norm is a natural convex surrogate for the nonconvex sparse function, i.e., $\ell_0$-norm. The resulting problem is known as the sum-of-infeasibilities in the literature of optimization \cite{boyd2004convex}. Another known approach for enhancing sparsity is the smoothed $\ell_p$-minimization \cite{shi2016smoothed} by finding a tight approximation for the nonconvex $\ell_0$-norm, followed by the iteratively reweighted $\ell_2$-minimization algorithm. However, the smoothing parameters should be chosen carefully  since the convergence behavior of iterative reweighted algorithms may be sensitive to them \cite{chartrand2008iteratively, Yuanming_cvxsmooth18}.

\subsubsection{Low-Rank Optimization}
Simply dropping the rank-one constraint in problem $\mathscr{P}$ yields the semidefinite relaxation (SDR) technique \cite{luo2007approximation}.  The SDR technique is widely used as an effective approach to find approximate solutions for the nonconvex quadratic constrained quadratic programs. If the solution fails to be rank-one, we can obtain a rank-one approximate solution through the Gaussian randomization method \cite{luo2007approximation}. However, when the number of antennas $N$ increases, its performance deteriorates since the probability of returning rank-one solutions is low \cite{chen2018uniform,chen2017admm}.

To address the limitations of the existing works, in this paper, we shall propose a unified difference-of-convex-functions (DC) programming approach to solve the sparse and low-rank optimization problem $\mathscr{P}$. This approach is able to enhance the sparsity in the objective as well as accurately detect the infeasibility in the nonconvex quadratic constraints, yielding considerably improvements compared with state-of-the-art algorithms. Specifically,
\begin{itemize}
    \item We will develop a parameter-free DC approach to enhance sparsity, thereby maximizing the number of selected devices.
    \item Instead of dropping the rank-one constraint directly, we will propose a novel DC approach to guarantee the exact rank-one constraint.
\end{itemize}
Note that the proposed DC approach has the capability of guarantee the feasibility of rank-one constraint, which is critical for accurately detecting the feasibility of the nonconvex quadratic constraints in the procedure of device selection.

\section{DC  Representation for the Sparse and Low-Rank Functions}\label{sec:algorithm}
In this section, we shall propose a unified DC representation framework to solve the sparse and low-rank optimization problem $\mathscr{P}$ for federated learning with device selection. Specifically, the sparsity is induced by a novel DC representation for the $\ell_0$-norm. The sparsity structure provides a guideline for device selection. We then solve a sequence of feasibility detection problems with nonconvex quadratic constraints to find maximum selected devices. In particular, we present a novel DC representation for the rank function in the lifting problem to satisfy the rank-one constraint, which is capable of accurately detecting the feasibility of nonconvex quadratic programs during device selection procedure.

\subsection{DC Representation for Sparse Function}
Before introducing the DC representation for the $\ell_0$-norm, we first give the definition of Ky Fan $k$-norm.
\begin{definition}{Ky Fan $k$-norm \cite{fan1951maximum}:}
        The Ky Fan $k$-norm of vector $\bs{x}\in\mathbb{C}^{M}$ is a convex function of $\bs{x}$ and is given by the sum of largest-$k$ absolute values, i.e.,
        \begin{equation}
                |\!|\!|\bs{x}|\!|\!|_k = \sum_{i=1}^M |x_{\pi(i)}|,
        \end{equation}
        where $\pi$ is a permutation of $\{1,\cdots,M\}$ and $|x_{\pi(1)}|\geq \cdots \geq |x_{\pi(M)}|$.
\end{definition}
If the $\ell_0$-norm is less than $k$, its $\ell_1$-norm is equal to its Ky Fan $k$-norm. Based on this fact, the $\ell_0$-norm can be represented by the difference between $\ell_1$-norm and Ky Fan $k$-norm \cite{gotoh2017dc}:
\begin{equation}
        \|\bs{x}\|_0 = \min\{k:\|\bs{x}\|_1-|\!|\!|\bs{x}|\!|\!|_k=0, 0\leq k\leq M\}.
\end{equation}

\subsection{DC Representation for Low-Rank Constraint}
For the positive semidefinite (PSD) matrix $\bs{M}\in\mathbb{C}^{N\times N}$, the rank-one constraint can be equivalently rewritten as
\begin{equation}
        \sigma_i(\bs{M})=0,\forall i=2,\cdots,N,
\end{equation}
where $\sigma_i(\bs{M})$ is the $i$-th largest singular value of matrix $\bs{M}$. Note that the trace norm and spectral norm are given by
\begin{equation}
        \textrm{Tr}(\bs{M})=\sum_{i=1}^{N}\sigma_i(\bs{M}) ~\text{and}~\|\bs{M}\|_2=\sigma_1(\bs{M}),
\end{equation}
respectively. Therefore, we have the following proposition:
\begin{proposition}
        For PSD matrix $\bs{M}$ and $\textrm{Tr}(\bs{M})\geq 1$, we have
        \begin{equation}\label{eq:DC_lowrank}
        \textrm{rank}(\bs{M})= 1 \Leftrightarrow \textrm{Tr}(\bs{M})-\|\bs{M}\|_2 = 0.
\end{equation}
\end{proposition}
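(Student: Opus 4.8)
The plan is to reduce the statement to an elementary fact about the singular values of $\bm{M}$, exploiting that for a PSD matrix the singular values coincide with the (nonnegative) eigenvalues. Write the ordered singular values as $\sigma_1(\bm{M}) \geq \sigma_2(\bm{M}) \geq \cdots \geq \sigma_N(\bm{M}) \geq 0$. The identities recalled just above the proposition, $\textrm{Tr}(\bm{M}) = \sum_{i=1}^{N}\sigma_i(\bm{M})$ and $\|\bm{M}\|_2 = \sigma_1(\bm{M})$, give
\[
\textrm{Tr}(\bm{M}) - \|\bm{M}\|_2 = \sum_{i=2}^{N}\sigma_i(\bm{M}).
\]
Since every term of this sum is nonnegative, the difference vanishes if and only if $\sigma_i(\bm{M}) = 0$ for all $i = 2,\dots,N$, i.e.\ if and only if $\textrm{rank}(\bm{M}) \leq 1$.

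For the forward implication I would simply note that $\textrm{rank}(\bm{M}) = 1$ forces $\sigma_2(\bm{M}) = \cdots = \sigma_N(\bm{M}) = 0$, hence $\textrm{Tr}(\bm{M}) - \|\bm{M}\|_2 = 0$ by the display above. For the reverse implication, the same display shows that $\textrm{Tr}(\bm{M}) - \|\bm{M}\|_2 = 0$ implies $\textrm{rank}(\bm{M}) \leq 1$; to upgrade this to $\textrm{rank}(\bm{M}) = 1$ I would invoke the hypothesis $\textrm{Tr}(\bm{M}) \geq 1 > 0$, which rules out $\bm{M} = \bm{0}$ and therefore pins the rank down to exactly one. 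This is the only role of the normalization $\textrm{Tr}(\bm{M}) \geq 1$: without it the zero matrix would be a spurious solution of $\textrm{Tr}(\bm{M}) - \|\bm{M}\|_2 = 0$ with rank $0$.

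There is essentially no hard step here. The only points deserving a word of care are (i) the identification of singular values with eigenvalues for PSD matrices, so that $\textrm{Tr}(\bm{M})$ is genuinely the \emph{sum} of the singular values rather than merely bounded by it, and (ii) the bookkeeping that $\textrm{Tr}(\bm{M}) - \|\bm{M}\|_2$ equals precisely the tail sum $\sum_{i \geq 2}\sigma_i(\bm{M})$, with no cancellation or absolute-value subtleties. Once these are in place the equivalence follows immediately from nonnegativity of the $\sigma_i(\bm{M})$ together with the positivity hypothesis on the trace.
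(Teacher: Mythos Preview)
Your proof is correct and follows essentially the same approach as the paper: both argue via the identity $\textrm{Tr}(\bm{M})-\|\bm{M}\|_2=\sum_{i\geq 2}\sigma_i(\bm{M})$ for PSD $\bm{M}$, concluding that the difference vanishes iff $\textrm{rank}(\bm{M})\leq 1$, and then use $\textrm{Tr}(\bm{M})\geq 1$ to exclude the zero matrix. Your write-up is slightly more explicit about the tail-sum identity and the role of the trace hypothesis, but there is no substantive difference in method.
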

\begin{proof}
        If the rank of PSD matrix $\bs{M}$ is one, the trace norm is equal to the spectral norm as $\sigma_i(\bs{M})=0$ for all $i\geq 2$. The equation $\textrm{Tr}(\bs{M})-\|\bs{M}\|_2 = 0$ implies that $\sigma_i(\bs{M})=0$ for all $i\geq 2$, i.e., $\textrm{rank}(\bs{M})\leq 1$. And we have $\sigma_1(\bs{M})>0$ from $\textrm{Tr}(\bs{M})\geq 1$. Therefore, $\textrm{rank}(\bs{M})= 1$ holds if $\textrm{Tr}(\bs{M})-\|\bs{M}\|_2 = 0$.
\end{proof}

\subsection{A Unified DC Representation Framework}
The main idea of our proposed DC representation framework is to induce the sparsity of $\bs{x}$ in the first step, which will provide guidelines for determining the priority of selecting devices. Then we shall solve a series of feasibility detection problems to find maximum selected devices such that the MSE requirement is satisfied. This two-step framework is illustrated in Fig. \ref{fig:two_step}. And each step will be accomplished by solving a DC program.
\begin{figure}[h]
    \centering
    \includegraphics[width=0.9\columnwidth]{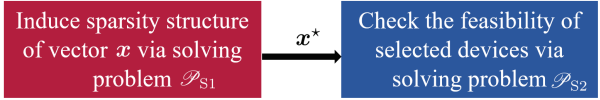}
    \caption{A two-step framework for device selection.}
    \label{fig:two_step}
\end{figure}
\subsubsection{Step I: Sparsity Inducing}
In the first step, we solve the following DC program for problem $\mathscr{P}$:
\begin{eqnarray}
\mathscr{P}_{\text{S1}}:\mathop {\textrm{minimize}}_{\bm{x},\bm{M}}&&\|\bm{x}\|_1-|\!|\!|\bs{x}|\!|\!|_{k}+ \textrm{Tr}(\bs{M})-\|\bs{M}\|_2 \nonumber\\
\textrm{subject to}&&\textrm{Tr}(\bm{M})-\gamma_i\bm{h}_i^{\sf{H}}\bm{M}\bm{h}_i\le x_i, \forall i=1,\cdots,M \nonumber\\
&&\bm{M}\succeq\bs{0},\quad \textrm{Tr}(\bs{M})\geq 1, \bm{x}\succeq\bs{0}.
\end{eqnarray}
By sequentially solving problem $\mathscr{P}_{\text{S1}}$, we can obtain the sparse vector $\bs{x}^{\star}$ such that the objective value achieves zero through increasing $k$ from $0$ to $M$. Note that the rank one constraint of matrix $\bm{M}$ shall be satisfied when the objective value equals zero with $\textrm{Tr}(\bs{M})-\|\bs{M}\|_2 = 0$. 

\subsubsection{Step II: Feasibility Detection}
The solution $\bs{x}$ obtained in the first step characterizes the gap between the MSE requirement and the achievable MSE for each device. Therefore, in the second step, we propose to select device $k$ with higher priority if $x_k$ is small. The elements of $\bs{x}$ can be arranged in descending order $x_{\pi(1)}\geq \cdots \geq x_{\pi(M)}$. We will find the minimum $k$ by increasing $k$ from $1$ to $M$ such that selecting all devices in $\mathcal{S}^{[k]}$ is feasible, where the set $\mathcal{S}^{[k]}$ is chosen as $\{\pi(k),\pi(k+1),\cdots,\pi(M)\}$. 

In detail, if all devices in $\mathcal{S}^{[k]}$ can be selected, the following optimization problem
\begin{eqnarray}
\mathop {\textrm{find}} &&{\bm{m}} \nonumber\\
\textrm{subject to}&& \|\bm{m}\|^2-\gamma_i\|\bm{m}^{\sf{H}}\bm{h}_i\|^2\le 0, \forall i\in\mathcal{S}^{[k]} \nonumber\\
&&\|\bm{m}\|^2\geq 1.\label{prob:feasibility_detection}
\end{eqnarray}
should be feasible. It can be equivalently reformulated as
\begin{eqnarray}
\mathop {\textrm{find}} &&{\bm{M}} \nonumber\\
\textrm{subject to}&&\textrm{Tr}(\bm{M})-\gamma_i\bm{h}_i^{\sf{H}}\bm{M}\bm{h}_i\le 0, \forall i\in\mathcal{S}^{[k]} \nonumber\\
&&\bm{M}\succeq\bs{0}, \textrm{Tr}(\bs{M})\geq 1, \textrm{rank}(\bs{M})=1 \label{prob:feasibility_detection_lift}
\end{eqnarray}
using the matrix lifting technique. To guarantee the feasibility of the fixed-rank constraint for accurately detecting the feasibility of MSE constraints, we propose the following DC approach by minimizing the difference between trace norm and spectral norm:
\begin{eqnarray}
\mathscr{P}_{\text{S2}}:\mathop {\textrm{minimize}}_{\bm{M}}&&\textrm{Tr}(\bs{M}) -\|\bs{M}\|_2 \nonumber\\
\textrm{subject to}&&\textrm{Tr}(\bm{M})-\gamma_i\bm{h}_i^{\sf{H}}\bm{M}\bm{h}_i\le 0, \forall i\in\mathcal{S}^{[k]} \nonumber\\
&&\bm{M}\succeq\bs{0},\quad \textrm{Tr}(\bs{M})\geq 1.
\end{eqnarray}

That is, when the objective value of problem $\mathscr{P}_{\text{S2}}$ equals zero given set $\mathcal{S}^{[k]}$, we conclude that all devices in $\mathcal{S}^{[k]}$ are selected while satisfying the MSE requirement, i.e., problem (\ref{prob:feasibility_detection}) is feasible for $\mathcal{S}^{[k]}$. Note that the solution $\bs{M}^*$ shall be an exact rank-one matrix and a feasible receiver beamforming vector $\bs{m}$ can be obtained through Cholesky decomposition $\bs{M}^*=\bs{m}\bs{m}^{\sf{H}}$.

The proposed DC representation framework for solving the sparse and low-rank optimization problem in federated learning is presented in Algorithm \ref{algorithm:device_selection}. Since the DC program is still nonconvex, in next section, we will develop the DC Algorithm (DC) \cite{tao1997convex} for the DC optimization problem $\mathscr{P}_{\text{S1}}$ and problem $\mathscr{P}_{\text{S2}}$. We further contribute by establishing the convergence rate of DC algorithm. Due to the superiority of the presented DC representation (\ref{eq:DC_lowrank}) for rank-one constraint, our proposed DC approach for accurate feasibility detection considerably outperforms the SDR approach \cite{luo2007approximation} by simply dropping the rank-one constraint, which will be demonstrated through numerical experiments in Section V.
\SetNlSty{textbf}{}{:}
\IncMargin{1em}
\begin{algorithm}[h]
\textbf{Step 1:} sparsity inducing \\
$k\leftarrow 0$ \\
\While{objective value of $\mathscr{P}_{\text{S1}}$ is not zero}{
Obtain solution $\bs{x}$ by solving the DC program $\mathscr{P}_{\text{S1}}$ \\
$k \leftarrow k+1$
         }
\vspace{1em}
\textbf{Step 2:} feasibility detection\\
 Order $\bm{x}$ in descending order as $x_{\pi(1)}\geq \cdots \geq x_{\pi(M)}$\\
$k\leftarrow 1$ \\
\While{objective value of $\mathscr{P}_{\text{S2}}$ is not zero}{
$\mathcal{S}^{[k]} \leftarrow  \{\pi(k),\pi(k+1),\cdots,\pi(M)\}$ \\
Obtain solution $\bs{M}$ by solving the DC program $\mathscr{P}_{\text{S2}}$\\
$k \leftarrow k+1$
}
\textbf{Output:} $\bs{m}$ through Cholesky decomposition $\bs{M}=\bs{m}\bs{m}^{\sf{H}}$, and the set of selected devices $\mathcal{S}^{[k]}=\{\pi(k),\pi(k+1),\cdots,\pi(M)\}$
 \caption{DC Representation Framework for Solving Problem $\mathscr{P}$ in Federated Learning with Device Selection}
 \label{algorithm:device_selection}
\end{algorithm}

\section{DC Algorithm for DC Program with Convergence Guarantees}
In this section, the DC Algorithm  will be developed by successively solving the convex relaxation of primal problem and dual problem of DC program. To further establish the convergence results, we add quadratic terms in convex functions while their difference (i.e., the objective value) remains unchanged. With this technique, we represent the DC objective function as the difference of strongly convex functions, which allows us establish the convergence rate of the DC algorithm.

\subsection{Difference-of-Strongly-Convex-Functions Representation}
The DC formulations $\mathscr{P}_{\text{S1}}$ and $\mathscr{P}_{\text{S2}}$ for sparse and low-rank optimization are nonconvex programs with DC objective functions and convex constraints. Although DC functions are nonconvex, they have good problem structures and the DC Algorithm can be developed based on the principles provided in \cite{tao1997convex}. In order to establish the convergence result of the DC algorithm, we will represent the DC objective function as the difference of strongly convex functions.

Specifically, we can equivalently rewrite problem $\mathscr{P}_{\text{S1}}$ as 
\begin{equation}\label{prob:S1_unconstrained}
    \mathop{\textrm{minimize}}_{\bs{x},\bs{M}}~ f_1=\|\bm{x}\|_1-|\!|\!|\bs{x}|\!|\!|_{k}+ \textrm{Tr}(\bs{M})-\|\bs{M}\|_2+I_{\mathcal{C}_1}(\bs{x},\bs{M}),
\end{equation}
and problem $\mathscr{P}_{\text{S2}}$ as
\begin{equation}\label{prob:S2_unconstrained}
    \mathop{\textrm{minimize}}_{\bs{M}}~ f_2=\textrm{Tr}(\bs{M})-\|\bs{M}\|_2+I_{\mathcal{C}_2}(\bs{M}),
\end{equation}
respectively. Here $\mathcal{C}_1,\mathcal{C}_2$ are positive semidefinite cones that integrates the constraints of problem $\mathscr{P}_{\text{S1}}$ and problem $\mathscr{P}_{\text{S2}}$, and the indicator function is defined as
\begin{equation}
    I_{\mathcal{C}_1}(\bs{x},\bs{M}) = \left\{\begin{aligned}
        & 0, && (\bs{x},\bs{M})\in\mathcal{C}_1 \\
        & +\infty, && \text{otherwise}
    \end{aligned}\right..
\end{equation}

In order to establish the convergence result of the DC algorithm, we rewrite the DC functions $f_1,f_2$ as the difference of \textit{strongly} convex functions, i.e., $f_1=g_1-h_1$ and $f_2 = g_2-h_2$, where
\begin{align}
        g_1&=\|\bm{x}\|_1+ \textrm{Tr}(\bs{M})+I_{\mathcal{C}_1}(\bs{x},\bs{M})+\frac{\alpha}{2}(\|\bs{x}\|_F^2+\|\bs{M}\|_F^2),\\ h_1 &= |\!|\!|\bs{x}|\!|\!|_{k}+\|\bs{M}\|_2+\frac{\alpha}{2}(\|\bs{x}\|_F^2+\|\bs{M}\|_F^2),\\
        g_2 &=\textrm{Tr}(\bs{M})+I_{\mathcal{C}_2}(\bs{M})+\frac{\alpha}{2}\|\bs{M}\|_F^2, \\    h_2&=\|\bs{M}\|_2+\frac{\alpha}{2}\|\bs{M}\|_F^2.
\end{align}
By adding quadratic terms, $g_1,g_2,h_1,h_2$ are all $\alpha$-strongly convex functions. Then problem (\ref{prob:S1_unconstrained}) and problem (\ref{prob:S2_unconstrained}) admit the uniform structure of minimizing the difference of two strongly convex functions
\begin{eqnarray}\label{prob:primal}
\mathop {\textrm{minimize}}_{\bm{X}\in\mathbb{C}^{m\times n}}&&f(\bs{X})=g(\bs{X})-h(\bs{X}).
\end{eqnarray}
For complex domain $\bs{X}$, we shall apply Wirtinger calculus \cite{bouboulis2012adaptive} for algorithm design. The DC algorithm is given by constructing sequences of candidates to primal solutions and dual solutions. Since  the primal problem (\ref{prob:primal}) and its dual problem are still nonconvex, convex relaxation is further needed.

\subsection{DC Algorithm for Sparse and Low-Rank Optimization}
According to the Fenchel's duality \cite{rockafellar2015convex}, the dual problem of problem (\ref{prob:primal}) is given by
\begin{eqnarray}\label{prob:dual}
\mathop {\textrm{minimize}}_{\bm{Y}\in\mathbb{C}^{m\times n}}&&h^*(\bs{Y})-g^*(\bs{Y}),
\end{eqnarray}
where $g^*$ and $h^*$ are the conjugate functions of $g$ and $h$, respectively. The conjugate function is defined as
\begin{equation}
    g^*(\bs{Y})=\sup_{\bs{X}\in\mathbb{C}^{m\times n}}\langle \bs{X},\bs{Y} \rangle-g(\bs{X}),
\end{equation}
where $\langle \bs{X},\bs{Y} \rangle=\rm{Real}\big(\textrm{Tr}(\bs{X}^{H}\bs{Y})\big)$ defines the inner product of two matrices \cite{bouboulis2012adaptive}. The $t$-th iteration of the simplified DC algorithm is to solve the convex approximation of primal problem and dual problem by linearizing the concave part: 
\begin{align}
    &\bs{Y}^{[t]}=\arg\inf_{\bs{Y}\in\mathcal{Y}}~h^*(\bs{Y})-[g^*(\bs{Y}^{[t-1]})+\langle \bs{Y}-\bs{Y}^{[t-1]}, \bs{X}^{[t]}\rangle],\label{dc:iter1}\\
    &\bs{X}^{[t+1]}=\arg\inf_{\bs{X}\in\mathcal{X}}~g(\bs{X})-[h(\bs{X}^{[t]})+\langle \bs{X}-\bs{X}^{[t]}, \bs{Y}^{[t]}\rangle]. \label{dc:iter2}
\end{align}
According to the Fenchel biconjugation theorem \cite{rockafellar2015convex}, equation (\ref{dc:iter1}) can be rewritten as
\begin{equation}
    \bs{Y}^{[t]}\in\partial_{\bs{X}^{[t]}} h,
\end{equation}
$\partial_{\bs{X}^{[t]}} h$ is the subgradient of $h$ with respect to $\bs{X}$ at $\bs{X}^{[t]}$. 

Therefore, iterations $\bs{x}^{[t]},\bs{M}^{[t]}$ of the DC algorithm for problem $\mathscr{P}_{\text{S1}}$ are constructed as the solution to the following convex optimization problem
\begin{eqnarray}
\mathop {\textrm{minimize}}_{\bm{x},\bm{M}}&&g_1-\langle\partial_{\bs{x}^{[t-1]}} h_1, \bs{x}\rangle- \langle\partial_{\bs{M}^{[t-1]}} h_1, \bs{M}\rangle\nonumber\\
\textrm{subject to}&&\textrm{Tr}(\bm{M})-\gamma_i\bm{h}_i^{\sf{H}}\bm{M}\bm{h}_i\le x_i, \forall i=1,\cdots,M, \nonumber\\
&&\bm{M}\succeq\bs{0},\quad \textrm{Tr}(\bs{M})\geq 1, \bm{x}\succeq\bs{0} \label{eqalg:S1}.
\end{eqnarray}
The iteration $\bs{M}^{[t]}$ for problem $\mathscr{P}_{\text{S2}}$ is given by the solution to the following optimization problem
\begin{eqnarray}
\mathop {\textrm{minimize}}_{\bm{M}}&&g_2-\langle\partial_{\bs{M}^{[t-1]}} h_2, \bs{M}\rangle\nonumber\\
\textrm{subject to}&&\textrm{Tr}(\bm{M})-\gamma_i\bm{h}_i^{\sf{H}}\bm{M}\bm{h}_i\le 0, \forall i\in\mathcal{S}^{[k]}, \nonumber\\
&&\bm{M}\succeq\bs{0},\quad \textrm{Tr}(\bs{M})\geq 1 \label{eqalg:S2}.
\end{eqnarray}
The subgradient of $h_1$ and $h_2$ are given by
\begin{align}
        \partial_{\bs{x}}h_1 &= \partial |\!|\!|\bs{x}|\!|\!|_{k}+\alpha\bs{x}, \\ \partial_{\bs{M}}h_1 &= \partial_{\bs{M}}h_2 =\partial \|\bs{M}\|_2+\alpha\bs{M}.
\end{align}
The subgradient of $|\!|\!|\bs{x}|\!|\!|_{k}$ can be computed by \cite{gotoh2017dc}
\begin{equation}
        i\text{-th entry of } \partial|\!|\!|\bs{x}|\!|\!|_{k}=\left\{\begin{aligned}
                &\textrm{sign}(x_i), && |x_i|\geq |x_{(k)}| \\
                & 0, && |x_i|< |x_{(k)}|
        \end{aligned}\right..
 \end{equation}
The subgradient of $\|\bs{M}\|_2$ is given by the following proposition.
\begin{proposition}
    The subgradient of $\|\bs{M}\|_2$ can be computed as $\bs{v}_1\bs{v}_1^{\sf{H}}$, where $\bs{v}_1\in\mathbb{C}^N$ is the eigenvector of the largest eigenvalue $\sigma_1(\bs{M})$.
\end{proposition}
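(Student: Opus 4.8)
The plan is to characterize the subdifferential of the spectral norm $\|\bs{M}\|_2 = \sigma_1(\bs{M})$ at a point where $\bs{M}$ is PSD, and then exhibit $\bs{v}_1\bs{v}_1^{\sf{H}}$ as a member of it. First I would recall that for a convex function $f$, a matrix $\bs{G}$ lies in $\partial f(\bs{M})$ if and only if $f(\bs{N}) \geq f(\bs{M}) + \langle \bs{G}, \bs{N}-\bs{M}\rangle$ for all $\bs{N}$, where the inner product is the one defined in the excerpt, $\langle \bs{X},\bs{Y}\rangle = \mathrm{Real}(\textrm{Tr}(\bs{X}^{\sf{H}}\bs{Y}))$. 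So the goal reduces to verifying this single inequality with $\bs{G} = \bs{v}_1\bs{v}_1^{\sf{H}}$, where $\bs{v}_1$ is a unit-norm eigenvector associated with the largest eigenvalue $\sigma_1(\bs{M})$ (which equals the largest singular value since $\bs{M}\succeq \bs{0}$).

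Next I would carry out the verification directly. Since $\|\bs{v}_1\|=1$, we have $\langle \bs{v}_1\bs{v}_1^{\sf{H}}, \bs{M}\rangle = \bs{v}_1^{\sf{H}}\bs{M}\bs{v}_1 = \sigma_1(\bs{M}) = \|\bs{M}\|_2$, so the claimed subgradient inequality $\|\bs{N}\|_2 \geq \|\bs{M}\|_2 + \langle \bs{v}_1\bs{v}_1^{\sf{H}}, \bs{N}-\bs{M}\rangle$ simplifies to $\|\bs{N}\|_2 \geq \langle \bs{v}_1\bs{v}_1^{\sf{H}}, \bs{N}\rangle = \bs{v}_1^{\sf{H}}\bs{N}\bs{v}_1$. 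This last inequality is exactly the variational (Rayleigh-quotient) characterization of the spectral norm: for any Hermitian $\bs{N}$ and any unit vector $\bs{u}$, $\bs{u}^{\sf{H}}\bs{N}\bs{u} \leq \lambda_{\max}(\bs{N}) \leq \sigma_1(\bs{N}) = \|\bs{N}\|_2$, and for general $\bs{N}$ one still has $|\bs{u}^{\sf{H}}\bs{N}\bs{u}| \leq \|\bs{N}\|_2$ by Cauchy--Schwarz together with $\|\bs{N}\bs{u}\|\leq\|\bs{N}\|_2\|\bs{u}\|$. Hence the inequality holds for all $\bs{N}$, which establishes $\bs{v}_1\bs{v}_1^{\sf{H}}\in\partial\|\bs{M}\|_2$.

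I would also remark that this is consistent with the general duality fact that $\partial\|\bs{M}\|_2$ equals the set of matrices $\bs{G}$ in the unit trace-norm ball satisfying $\langle \bs{G},\bs{M}\rangle = \|\bs{M}\|_2$ (the nuclear norm being dual to the spectral norm), and that $\bs{v}_1\bs{v}_1^{\sf{H}}$ has trace norm one and attains this equality; for the DC algorithm only one element of the subdifferential is needed in the linearization step (\ref{dc:iter2}), so selecting this particular $\bs{v}_1\bs{v}_1^{\sf{H}}$ suffices.

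The step I expect to be the only mild subtlety is bookkeeping over the complex field: ensuring that the Wirtinger/real-inner-product convention from the excerpt is used consistently so that $\langle \bs{v}_1\bs{v}_1^{\sf{H}}, \bs{M}\rangle$ really evaluates to the real scalar $\bs{v}_1^{\sf{H}}\bs{M}\bs{v}_1$ (which is real because $\bs{v}_1\bs{v}_1^{\sf{H}}$ and $\bs{M}$ are Hermitian), and that the eigenvector $\bs{v}_1$ is normalized to unit Euclidean norm. There is also the non-uniqueness caveat when $\sigma_1(\bs{M})$ has multiplicity greater than one — then the full subdifferential is the convex hull of such rank-one terms over the top eigenspace — but since the proposition only claims membership, not equality, picking any one unit top-eigenvector is enough and no additional argument is required.
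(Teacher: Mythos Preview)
Your proof is correct and self-contained, but it takes a different route from the paper. The paper invokes the general characterization of the subdifferential of an orthogonally invariant norm due to Watson, namely $\partial\|\bs{M}\|_2 = \mathrm{conv}\{\bs{V}\,\mathrm{diag}(\bs{d})\,\bs{V}^{\sf H}:\bs{d}\in\partial\|\bs{\sigma}(\bs{M})\|_\infty\}$ where $\bs{M}=\bs{V}\bs{\Sigma}\bs{V}^{\sf H}$ is the eigendecomposition, and then observes that $\bs{d}=[1,0,\ldots,0]^{\sf H}$ lies in $\partial\|\bs{\sigma}(\bs{M})\|_\infty$, which immediately yields $\bs{v}_1\bs{v}_1^{\sf H}$ as a subgradient. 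Your argument instead verifies the subgradient inequality directly via the Rayleigh-quotient bound $\bs{v}_1^{\sf H}\bs{N}\bs{v}_1\leq\|\bs{N}\|_2$, together with $\langle\bs{v}_1\bs{v}_1^{\sf H},\bs{M}\rangle=\sigma_1(\bs{M})$. Your approach is more elementary and requires no external citation, while the paper's approach situates the claim within a broader structural result that also describes the full subdifferential (useful when the top eigenvalue is not simple). Both arguments are valid; yours is arguably cleaner for the limited claim of membership.
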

\begin{proof}
    The subdifferential of orthogonal invariant norm $\|\bs{M}\|_2$ for PSD matrix $\bs{M}$ is given by \cite{watson1992characterization}
    \begin{equation}
         \partial \|\bs{M}\|_2 ={\rm{conv}}\{\bs{V}\textrm{diag}(\bs{d})\bs{V}^{\sf H}: \bs{d} \in \partial \|\bs{\sigma}(\bs{M})\|_{\infty}\},
    \end{equation}
    where ${\rm{conv}}$ denotes the convex hull of a set and $\bs{M}=\bs{V}\bs{\Sigma}\bs{V}^{\sf{H}}$ is the singular value decomposition of $\bs{M}$, and $\bs{\sigma}(\bs{M})=[\sigma_i(\bs{M})]\in\mathbb{C}^N$ is the vector formed by all singular values of $\bs{M}$. Since $\sigma_1(\bs{M})\geq \cdots \geq \sigma_N(\bs{M})\geq 0$, we have
    \begin{equation}
        [1, \underbrace{0, \cdots, 0}_{N-1}]^{\sf{H}}\in \partial\|\bs{\sigma}(\bs{M})\|_{\infty}.
    \end{equation}
    Therefore, one subgradient of $\|\bs{M}\|_2$ is given by $\bs{v}_1\bs{v}_1^{\sf{H}}$.
\end{proof}

\subsection{Convergence Analysis}
The convergence of the presented DC algorithm for problem $\mathscr{P}_{\text{S1}}$ and problem $\mathscr{P}_{\text{S2}}$ is given by the following proposition.
\begin{proposition}\label{prop:convergence}
    The sequence $\{(\bs{M}^{[t]},\bs{x}^{[t]})\}$ generated by iteratively solving problem (\ref{eqalg:S1}) for problem $\mathscr{P}_{\text{S1}}$ has the following properties:
    \begin{enumerate}[(i)]
        \item \label{property1} The sequence $\{(\bs{M}^{[t]},\bs{x}^{[t]})\}$ converges to a critical point of $f_1$ (\ref{prob:S1_unconstrained}) from arbitrary initial point, and the sequence of $\{f_1^{[t]}\}$ is strictly decreasing and convergent.
        \item \label{property2} For any $t=0,1,\cdots$, we have
        \begin{align}
        \text{Avg}\Big(\|\bs{M}^{[t]}-\bs{M}^{[t+1]}\|_F^2\Big)&\leq \frac{f_1^{[0]}-f_1^{\star}}{\alpha(t+1)},\\
        \text{Avg}\Big(\|\bs{x}^{[t]}-\bs{x}^{[t+1]}\|_2^2\Big)&\leq \frac{f_1^{[0]}-f_1^{\star}}{\alpha(t+1)},
    \end{align}
    where $f_1^{\star}$ is the global minimum of $f_1$ and $\text{Avg}\Big(\|\bs{M}^{[t]}-\bs{M}^{[t+1]}\|_F^2\Big)$ denotes the average of the sequence $\{\|\bs{M}^{[i]}-\bs{M}^{[i+1]}\|_F^2\}_{i=0}^{t}$.
    \end{enumerate}   
    Likewise, the sequence $\{(\bs{M}^{[t]}\}$ generated by iteratively solving problem (\ref{eqalg:S2}) for problem $\mathscr{P}_{\text{S2}}$ has the following properties:
    \begin{enumerate}[(i)]\addtocounter{enumi}{2}
        \item \label{property3} The sequence $\{\bs{M}^{[t]}\}$ converges to a critical point of $f_2$ (\ref{prob:S2_unconstrained}) from arbitrary initial point, and the sequence of $\{f_2^{[t]}\}$ is strictly decreasing and convergent.
        \item \label{property4} For any $t=0,1,\cdots$, we have
        \begin{align}
        \text{Avg}\Big(\|\bs{M}^{[t]}-\bs{M}^{[t+1]}\|_F^2\Big)&\leq \frac{f_2^{[0]}-f_2^{\star}}{\alpha(t+1)}.
    \end{align}
    where $f_2^{\star}$ is the global minimum of $f_2$.
    \end{enumerate} 
\end{proposition}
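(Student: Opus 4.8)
The plan is to prove both halves of the proposition simultaneously by treating $\mathscr{P}_{\text{S1}}$ and $\mathscr{P}_{\text{S2}}$ as instances of the template \eqref{prob:primal}, $f=g-h$, where -- crucially -- \emph{both} $g$ and $h$ are $\alpha$-strongly convex (this is precisely why $\tfrac{\alpha}{2}\|\cdot\|_F^2$ was added to each side, which leaves $f$ and hence every subproblem unchanged). First I would record the two subgradient inclusions produced by one iteration: \eqref{dc:iter1} together with Fenchel biconjugation gives $\bs{Y}^{[t]}\in\partial h(\bs{X}^{[t]})$, while the first-order optimality condition of the convex subproblem \eqref{dc:iter2} (equivalently \eqref{eqalg:S1}, \eqref{eqalg:S2}) gives $\bs{Y}^{[t]}\in\partial g(\bs{X}^{[t+1]})$. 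All subdifferentials are taken in the Wirtinger sense on $\mathbb{C}^{m\times n}$, and the explicit formulas for $\partial|\!|\!|\bs{x}|\!|\!|_k$ and $\partial\|\bs{M}\|_2$ derived above show these sets are nonempty, so the recursion is well defined and each $\bs{X}^{[t]}$ is feasible (hence $f^{[t]}<+\infty$).

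Next I would derive the sufficient-decrease inequality. Using $\alpha$-strong convexity of $g$ at $\bs{X}^{[t+1]}$ with subgradient $\bs{Y}^{[t]}$, and of $h$ at $\bs{X}^{[t]}$ with subgradient $\bs{Y}^{[t]}$, then adding the two estimates so that the inner-product terms cancel, one obtains $f^{[t]}-f^{[t+1]}\ge\alpha\,\|\bs{X}^{[t]}-\bs{X}^{[t+1]}\|_F^2\ge0$. Thus $\{f^{[t]}\}$ is nonincreasing and strictly decreasing until a fixed point $\bs{X}^{[t]}=\bs{X}^{[t+1]}$ is reached (such a fixed point satisfies $\partial g(\bs{X}^{[t]})\cap\partial h(\bs{X}^{[t]})\neq\emptyset$, i.e.\ it is a critical point of $f$). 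Boundedness from below is immediate on the feasible cone, since $\|\bs{x}\|_1\ge|\!|\!|\bs{x}|\!|\!|_k$ and $\textrm{Tr}(\bs{M})=\sum_i\sigma_i(\bs{M})\ge\sigma_1(\bs{M})=\|\bs{M}\|_2$ for $\bs{M}\succeq\bs{0}$; hence $f_1,f_2\ge0$ and $\{f^{[t]}\}$ converges to some $f^{\star}\ge0$. This settles the ``strictly decreasing and convergent'' clauses (claim (i) and claim (iii)).

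For the $O(1/t)$ rates (claim (ii) and claim (iv)) I would telescope the decrease inequality over $i=0,\dots,t$: $\alpha\sum_{i=0}^{t}\|\bs{X}^{[i]}-\bs{X}^{[i+1]}\|_F^2\le f^{[0]}-f^{[t+1]}\le f^{[0]}-f^{\star}$, then divide by $t+1$ to bound $\text{Avg}(\|\bs{X}^{[t]}-\bs{X}^{[t+1]}\|_F^2)$. For $\mathscr{P}_{\text{S1}}$ the variable is the pair $(\bs{x},\bs{M})$, so $\|\bs{X}\|_F^2=\|\bs{x}\|_2^2+\|\bs{M}\|_F^2$ and each block's squared increment is at most the joint one, giving both displayed inequalities; for $\mathscr{P}_{\text{S2}}$ the variable is $\bs{M}$ itself and the single bound follows directly. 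A by-product is $\|\bs{X}^{[t]}-\bs{X}^{[t+1]}\|_F\to0$.

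Finally, for convergence to a critical point I would: (a) argue that the iterates stay bounded, via coercivity of the strongly convex subproblem objectives together with the constraints $\bs{M}\succeq\bs{0}$, $\textrm{Tr}(\bs{M})\ge1$, $\bs{x}\succeq\bs{0}$ and $\textrm{Tr}(\bs{M})-\gamma_i\bs{h}_i^{\sf H}\bs{M}\bs{h}_i\le x_i$, so accumulation points exist; (b) along a subsequence $\bs{X}^{[t_j]}\to\bs{X}^{\star}$, note $\bs{X}^{[t_j+1]}\to\bs{X}^{\star}$ as well, and pass the inclusions $\bs{Y}^{[t_j]}\in\partial g(\bs{X}^{[t_j+1]})\cap\partial h(\bs{X}^{[t_j]})$ to the limit using outer semicontinuity (closed graph) of $\partial g$ and $\partial h$, obtaining $\partial g(\bs{X}^{\star})\cap\partial h(\bs{X}^{\star})\neq\emptyset$, i.e.\ $\bs{0}\in\partial g(\bs{X}^{\star})-\partial h(\bs{X}^{\star})$; (c) upgrade subsequential to full convergence by invoking the Kurdyka--{\L}ojasiewicz property, which holds because $f_1$ and $f_2$ are semialgebraic (finite sums of polyhedral norms, spectral norms, and indicators of spectrahedra), together with a standard relative-error bound $\|\bs{W}^{[t+1]}\|\le c\,\|\bs{X}^{[t]}-\bs{X}^{[t+1]}\|_F$ for some $\bs{W}^{[t+1]}\in\partial f(\bs{X}^{[t+1]})$. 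I expect the arithmetic of the first three paragraphs to be routine; the main obstacle is this last step -- carrying the Wirtinger/complex subdifferential calculus through rigorously so the closed-graph passage is literally valid, and supplying the boundedness and KL ingredients needed for genuine single-limit convergence (if one only asserts ``every accumulation point is critical,'' step~(c) and most of step~(a) can be dropped).
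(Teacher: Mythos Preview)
Your proposal is correct and matches the paper's argument: both derive the sufficient-decrease inequality $f^{[t]}-f^{[t+1]}\ge\alpha\,\|\bs{X}^{[t]}-\bs{X}^{[t+1]}\|_F^2$ from the two inclusions $\bs{Y}^{[t]}\in\partial h(\bs{X}^{[t]})$ and $\bs{Y}^{[t]}\in\partial g(\bs{X}^{[t+1]})$ together with $\alpha$-strong convexity of $g$ and $h$, use $f\ge 0$ to get monotone convergence of $\{f^{[t]}\}$, and telescope for the $O(1/t)$ average bound. The only cosmetic difference is that the paper threads the decrease estimate through the Fenchel equalities $\langle\bs{X},\bs{Y}\rangle=h(\bs{X})+h^*(\bs{Y})$ and the dual objective $h^*-g^*$, whereas you add the two strong-convexity inequalities directly so the inner products cancel; the endpoint is identical. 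On the critical-point claim you are actually more careful than the paper: the paper effectively argues only that when $f^{[t+1]}=f^{[t]}$ one has $\partial g(\bs{X}^{[t+1]})\cap\partial h(\bs{X}^{[t+1]})\neq\emptyset$, and does not supply the boundedness of iterates or the KL-type argument you flag as needed for genuine full-sequence convergence---your closing caveat is well placed.
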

\begin{proof}
    Please refer to Appendix \ref{appd:convergence} for details.
\end{proof}

\section{Simulation Results}
In this section, we conduct numerical experiments to compare the proposed DC method with state-of-the-art approaches for federated learning with device selection.  The channel coefficient vectors $\bs{h}_i$'s between the BS and each mobile device follow the i.i.d. complex normal distribution, i.e., $\bs{h}_i\sim\mathcal{CN}(\bs{0},\bs{I})$. The average transmit signal-to-noise-ratio (SNR) $P_0/\sigma^2$ is chosen as $20$ dB. We assume that all devices have the same number of data points, i.e., $|\mathcal{D}_1|=\cdots=|\mathcal{D}_M|$, for which the pre-processing post-processing pair can be chosen as $\phi_i=1,\psi=1/|\mathcal{S}|$.

\subsection{Probability of Feasibility}
Consider the network with $M=20$ mobile devices and the BS is equipped with $N=6$ antennas. As a critical step for the device selection, the performance of feasibility detection with the proposed DC approach by solving $\mathscr{P}_{\text{S2}}$ shall be compared with the following state-of-the-art approaches:
\begin{itemize}
    \item \textbf{SDR} \cite{luo2007approximation}: Simply dropping the rank-one constraint of problem (\ref{prob:feasibility_detection}) yields the semidefinite relaxation (SDR) approach for the feasibility detection problem.
    \item \textbf{Global Optimization} \cite{lu2017efficient}: In \cite{lu2017efficient}, a global optimization approach is proposed with exponential time complexity in the worst case. We set the relative error tolerance as $\epsilon=10^{-5}$ and take its performance as our benchmark.
\end{itemize}
The results averaged over $100$ times are shown in Fig. \ref{fig:feasibility}, which demonstrates that the proposed DC-based approach outperforms SDR approach significantly and achieves the near-optimal performance compared with  the global optimization approach, and thus yields accurate feasibility detection.
\begin{figure}[h]
        \centering
        \includegraphics[width=0.85\columnwidth]{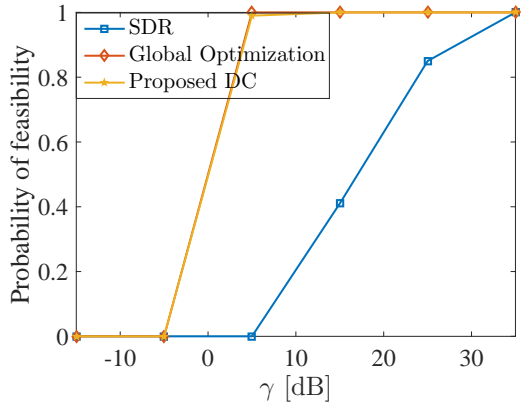}
        \caption{Probability of feasibility with different algorithms.}
        \label{fig:feasibility}
\end{figure}

We then evaluate the performance of the proposed DC approach over the number of antennas. Under different target MSE requirement, the results averaged over $100$ channel realizations are illustrated in Fig. \ref{fig:antenna}. It demonstrates that fast aggregation from mobile devices under a more stringent MSE requirement can be accomplished by increasing the number of antennas at the BS.
\begin{figure}[h]
        \centering
        \includegraphics[width=0.85\columnwidth]{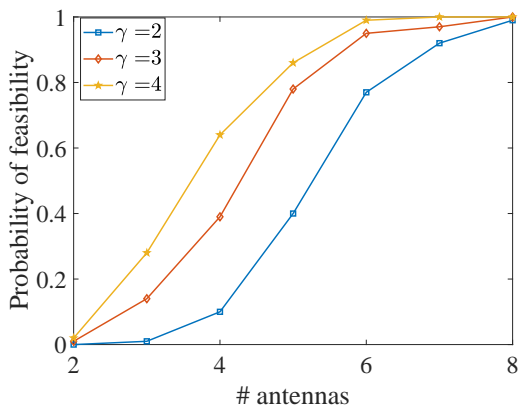}
        \caption{Probability of feasibility over the number of BS antennas with the proposed DC approach.}
        \label{fig:antenna}
\end{figure}

\subsection{Number of Selected Devices over Target MSE}
Consider a network with $20$ mobile devices and a $6$-antenna BS. Under the presented two-step framework and ordering rule in Algorithm \ref{algorithm:device_selection}, we compare the proposed DC Algorithm \ref{algorithm:device_selection} for device selection with the following state-of-the-art approaches:
\begin{itemize}
    \item \textbf{$\ell_1$+SDR} \cite{boyd2004convex} \cite{luo2007approximation}: The $\ell_1$-norm minimization is adopted to induce the sparsity of $\bs{x}$ in Step 1, and the nonconvex quadratic constraints are addressed with the SDR approach in Step 1 and Step 2.  
    \item \textbf{Reweighted $\ell_2$+SDR} \cite{shi2016smoothed}: We take the smoothed $\ell_p$-norm for  sparsity inducing of $\bs{x}$ in Step 1, which is solved by the reweighted $\ell_2$-minimization algorithm. The SDR approach is used to address the nonconvex quadratic program in Step 1 and Step 2.
\end{itemize}
The average results over $100$ channel realizations with different approaches for sparsity inducing and feasibility detection are illustrated in Fig. \ref{fig:selection}. It is demonstrated that the novel sparsity and low-rankness inducing approach via the proposed DC algorithm is able to select more  devices than other state-of-the-art approaches.
 \begin{figure}[h]
  \centering
  \includegraphics[width=0.85\columnwidth]{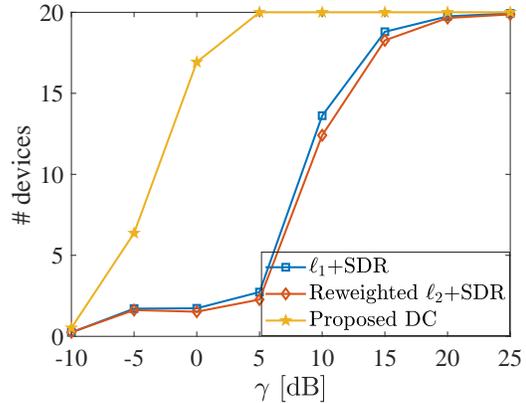}
  \caption{Average number of selected devices with different algorithms.}
  \label{fig:selection}
\end{figure}

\subsection{Performance of Proposed DC Approach for Distributed Federated Learning}
To show the performance of the proposed DC approach for device selection in distributed federated learning, we further train a support vector machine (SVM) classifier on CIFAR-10 dataset \cite{krizhevsky2009learning} with a $6$-antenna BS and $20$ mobile devices. CIFAR-10 is a commonly used dataset of images for classification and contains 10 different classes of objects. The benchmark is chosen as the case where all devices are selected and all local updates are aggregated without aggregation error. We average over $10$ channel realizations and the performances of all algorithms with $\gamma=5$dB are illustrated in Fig. \ref{fig:SVM}. The relative accuracy is defined by the test accuracy over random classification. The simulation results demonstrate that the proposed DC approach achieves lower training loss and higher prediction accuracy as shown in Fig. \ref{fig:loss} and Fig. {\ref{fig:accuracy}}, respectively.

\begin{figure}[h]
        \centering
        \subfloat[Training loss]{\includegraphics[width=0.85\columnwidth]{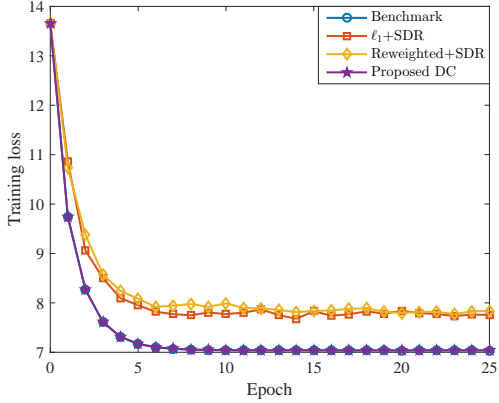}\label{fig:loss}}\hfil
        \subfloat[Relative prediction accuracy]{\includegraphics[width=0.85\columnwidth]{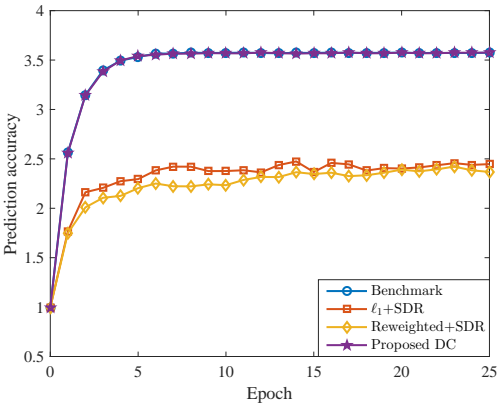}\label{fig:accuracy}}
        \caption{a) Convergence of different device selection algorithms for FedAvg. b) The relationship between communication rounds and test accuracy over random classification of the trained model. Each client updates its local model with stochastic gradient descent algorithm.}
        \label{fig:SVM}
\end{figure}

\section{Conclusion}
In this paper, we proposed a novel fast global model aggregation approach for  federated learning based on the principles of over-the-air computation. To improve the statistical learning performance for on-device distributed training, we developed a novel sparse and low-rank modeling approach to maximize the selected devices  with the MSE requirements for model aggregation. We provided a unified DC representation framework to induce sparsity and low-rankness, which is supported by the convergence guaranteed DC algorithm via successive convex relaxation. Simulation results demonstrated the admirable performance of  the proposed approaches compared with the state-of-the-art algorithms.

There are still some interesting open problems on the fast model aggregation for on-device federated learning including:
\begin{itemize}
        \item This work assumes the perfect channel state information during receiver beamforming. It would be interesting to investigate the impacts of channel uncertainty in model aggregation.
        \item The security issues are also critical for model aggregation, though it is beyond the scope of this paper. It is also interesting to propose a robust approach against the malicious attacks during model aggregation.
        \item The proposed DC approach for feasibility detection has comparable performance with the global optimization approach through numerical experiments. But it remains challenging to characterize its optimality conditions of the DC approach.
\end{itemize}

\appendices
\section{Proof of Proposition \ref{prop:tx_beam}}\label{appd:tx_beam}
The sequence $\{b_i\}$ given by Proposition \ref{prop:tx_beam} has the zero-forcing structure which enforces
\begin{equation}
    \sum_{i\in\mathcal{S}}\Big|\bs{m}^{\sf{H}}\bs{h}_ib_i-\phi_i\Big|^2=0.
\end{equation}
In addition, the MSE satisfies
\begin{align}
    {\sf{MSE}}(\hat{g},g)\geq \sigma^2\|\bs{m}\|^2.
\end{align}
Therefore, the MSE is minimized by the zero-forcing transmitter beamforming vectors $\{b_i\}$'s given in Proposition \ref{prop:tx_beam}.

\section{Proof of Proposition \ref{prop:convergence}}\label{appd:convergence}
Without loss of generality, we shall only present the proof of properties (\ref{property1}) and (\ref{property2}), while properties (\ref{property3}) and (\ref{property4}) can be proved with the same merit. For the sequence $\{(\bs{M}^{[t]},\bs{x}^{[t]})\}$ generated by iteratively solving problem (\ref{eqalg:S1}), we denote the dual variables as $\bs{Y}_M^{[t]}\in \partial_{\bs{M}^{[t]}}h_1 ,\bs{Y}_x^{[t]}\in \partial_{\bs{x}^{[t]}}h_1$. Due to the strong convexity of $h_1$, we have
\begin{align}
    h_1^{[t+1]}-h_1^{[t]}\geq &\langle \Delta_t\bs{M},\bs{Y}_M^{[t]} \rangle +\langle \Delta_{t}\bs{x},\bs{Y}_x^{[t]} \rangle \nonumber \\ &+\frac{\alpha}{2}\big(\|\Delta_t\bs{M}\|_F^2+\|\Delta_t\bs{x}\|_2^2\big), \label{appd:eq1} \\
    \langle \bs{M}^{[t]},\bs{Y}_M^{[t]} \rangle &+\langle \bs{x}^{[t]},\bs{Y}_x^{[t]} \rangle = h_1^{[t]}+{h_1^*}^{[t]}, \label{appd:fenchel_eq1}
\end{align}
where $\Delta_t\bs{M}=\bs{M}^{[t+1]}-\bs{M}^{[t]}$ and $\Delta_t\bs{x}=\bs{x}^{[t+1]}-\bs{x}^{[t]}$. Adding $g_1^{[t+1]}$ at both sides of (\ref{appd:eq1}), we obtain that
\begin{align}
    f_1^{[t+1]}\leq &g_1^{[t+1]}-h_1^{[t]}-\langle \Delta_t\bs{M},\bs{Y}_M^{[t]} \rangle +\langle \Delta_t\bs{x},\bs{Y}_x^{[t]} \rangle \nonumber \\ &-\frac{\alpha}{2}\big(\|\Delta_t\bs{M}\|_F^2+\|\Delta_t\bs{x}\|_2^2\big). \label{appd:eq2}
\end{align}

For the update of primal variable $\bs{M}$ and $\bs{x}$ according to equation (\ref{dc:iter2}), we have $\bs{Y}_{M}^{[t]}\in \partial_{\bs{M}^{[t+1]}} {g_1}, \bs{Y}_{x}^{[t]}\in \partial_{\bs{x}^{[t+1]}} g_1$. This implies that
\begin{align}
    g_1^{[t]}-g_1^{[t+1]}\geq &\langle -\Delta_t\bs{M}, \bs{Y}_M^{[t]} \rangle+\langle -\Delta_t\bs{x}, \bs{Y}_x^{[t]} \rangle \nonumber\\ &+\frac{\alpha}{2}\big(\|\Delta_t\bs{M}\|_F^2+\|\Delta_t\bs{x}\|_2^2\big), \label{appd:eq3} \\
    \langle \bs{M}^{[t+1]},\bs{Y}_M^{[t]} \rangle &+\langle \bs{x}^{[t+1]},\bs{Y}_x^{[t]} \rangle = g_1^{[t+1]}+{g_1^*}^{[t]}. \label{appd:fenchel_eq2}
\end{align}
Similarly, by adding $-h_1^{[t]}$ at both sides of equation (\ref{appd:eq3}), we have
\begin{align}
    f_1^{[t]}\geq &g_1^{[t+1]}-h_1^{[t]}+\langle  -\Delta_t\bs{M}, \bs{Y}_M^{[t]} \rangle+\langle -\Delta_t\bs{x}, \bs{Y}_x^{[t]} \rangle \nonumber\\ &+\frac{\alpha}{2}\big(\|\Delta_t\bs{M}\|_F^2+\|\Delta_t\bs{x}\|_2^2\big). \label{appd:eq4}
\end{align}
From equation (\ref{appd:fenchel_eq1}) and equation (\ref{appd:fenchel_eq2}), we deduce that
\begin{equation}
    g_1^{[t+1]}-h_1^{[t]}+\langle  -\Delta_t\bs{M}, \bs{Y}_M^{[t]} \rangle+ \langle -\Delta_t\bs{x}, \bs{Y}_x^{[t]}\rangle={f_1^*}^{[t]}, \label{appd:eq5}
\end{equation}
where $f_1^* = h_1^*-g_1^*$.
Combining equation (\ref{appd:eq2}), (\ref{appd:eq4}) and (\ref{appd:eq5}), it is derived that
\begin{align}
    f_1^{[t]}&\geq {f_1^*}^{[t]}+\frac{\alpha}{2}\big(\|\Delta_t\bs{M}\|_F^2+\|\Delta_t\bs{x}\|_2^2\big) \\&\geq f_1^{[t+1]}+\alpha\big(\|\Delta_t\bs{M}\|_F^2+\|\Delta_t\bs{x}\|_2^2\big).
\end{align}


Then the sequence $\{f_1^{[t]}\}$ is non-increasing. Since $f_1\geq 0$ always holds, we conclude that the sequence $\{f_1^{[t]}\}$ is strictly decreasing until convergence, i.e.,
\begin{equation}
    \lim_{t\rightarrow \infty} \big(\|\bs{M}^{[t]}-\bs{M}^{[t+1]}\|_F^2+\|\bs{x}^{[t]}-\bs{x}^{[t+1]}\|_2^2\big) = 0.
\end{equation}
For every limit point, $f_1^{[t+1]}=f_1^{[t]}$, we have
\begin{equation}
    \|\bs{M}^{[t]}-\bs{M}^{[t+1]}\|_F^2=0, \|\bs{x}^{[t]}-\bs{x}^{[t+1]}\|_2^2=0,
\end{equation}
and
 \begin{equation}
    f^{[t+1]}={f^*}^{[t]}=f^{[t]}.
\end{equation}
Then it is followed by
 \begin{align}
    {h^*}^{[t]}+h^{[t+1]}&={g}^{[t]}+g^{[t+1]} \nonumber\\
    &=\langle \bs{M}^{[t+1]},\bs{Y}_M^{[t]} \rangle+\langle \bs{x}^{[t+1]},\bs{Y}_x^{[t]} \rangle,
\end{align}
i.e.,
\begin{equation}
    \bs{Y}_M^{[t]}\in\partial_{\bs{M}^{[t+1]}} h_1, \bs{Y}_x^{[t]} \in\partial_{\bs{x}^{[t+1]}} h_1.
\end{equation}
Therefore, $\bs{Y}_M^{[t]}\in \partial_{\bs{M}^{[t+1]}} g_1\cap\partial_{\bs{M}^{[t+1]}} h_1, \bs{Y}_x^{[t]}\in \partial_{\bs{x}^{[t+1]}} g_1\cap\partial_{\bs{x}^{[t+1]}} h_1$. It is concluded that $(\bs{M}^{[t+1]},\bs{x}^{[t+1]})$ is a critical point of $f_1=g_1-h_1$. In addition, since
\begin{align}
    \text{Avg}\Big(\|\bs{M}^{[t]}-&\bs{M}^{[t+1]}\|_F^2+\|\bs{x}^{[t]}-\bs{x}^{[t+1]}\|_2^2\Big) \nonumber \\
    &\leq \sum_{i=0}^{t}\frac{1}{\alpha(t+1)}(f_1^{[i]}-f_1^{[i+1]}) \\
    &\leq \frac{1}{\alpha(t+1)}(f_1^{[0]}-f_1^{[t+1]}) \\
    &\leq \frac{1}{\alpha(t+1)}(f_1^{[0]}-f_1^{\star}),
\end{align}
we conclude that property (\ref{property2}) holds, i.e.,
\begin{align}
    \text{Avg}\Big(\|\bs{M}^{[t]}-\bs{M}^{[t+1]}\|_F^2\Big)&\leq \frac{f_1^{[0]}-f_1^{\star}}{\alpha(t+1)},\\
    \text{Avg}\Big(\|\bs{x}^{[t]}-\bs{x}^{[t+1]}\|_2^2\Big)&\leq \frac{f_1^{[0]}-f_1^{\star}}{\alpha(t+1)}.
\end{align}

\bibliographystyle{ieeetr}
\bibliography{aircomp}

\begin{thebibliography}{10}

\bibitem{lecun2015deep}
Y.~LeCun, Y.~Bengio, and G.~Hinton, ``Deep learning,'' {\em nature}, vol.~521,
  no.~7553, p.~436, 2015.

\bibitem{stoica2017berkeley}
I.~Stoica, D.~Song, R.~A. Popa, D.~Patterson, M.~W. Mahoney, R.~Katz, A.~D.
  Joseph, M.~Jordan, J.~M. Hellerstein, J.~E. Gonzalez, {\em et~al.}, ``A
  berkeley view of systems challenges for {AI},'' {\em arXiv preprint
  arXiv:1712.05855}, 2017.

\bibitem{zhu2018towards}
G.~Zhu, D.~Liu, Y.~Du, C.~You, J.~Zhang, and K.~Huang, ``Towards an intelligent
  edge: Wireless communication meets machine learning,'' {\em arXiv preprint
  arXiv:1809.00343}, 2018.

\bibitem{park2018edgeai}
J.~Park, S.~Samarakoon, M.~Bennis, and M.~Debbah, ``Wireless network
  intelligence at the edge,'' {\em arXiv preprint arXiv:1812.02858}, 2018.

\bibitem{han2015deep}
S.~Han, H.~Mao, and W.~J. Dally, ``Deep compression: Compressing deep neural
  networks with pruning, trained quantization and huffman coding,'' {\em Proc.
  Int. Conf. Learn. Representations (ICLR)}, 2016.

\bibitem{Zhang_SPM18}
Y.~Cheng, D.~Wang, P.~Zhou, and T.~Zhang, ``Model compression and acceleration
  for deep neural networks: The principles, progress, and challenges,'' {\em
  IEEE Signal Process. Mag.}, vol.~35, pp.~126--136, Jan. 2018.

\bibitem{lin2017deep}
Y.~Lin, S.~Han, H.~Mao, Y.~Wang, and W.~J. Dally, ``Deep gradient compression:
  Reducing the communication bandwidth for distributed training,'' {\em Proc.
  Int. Conf. Learn. Representations (ICLR)}, 2018.

\bibitem{wang2018edge}
S.~Wang, T.~Tuor, T.~Salonidis, K.~K. Leung, C.~Makaya, T.~He, and K.~Chan,
  ``When edge meets learning: Adaptive control for resource-constrained
  distributed machine learning,'' in {\em Proc. IEEE Conf. Computer Commun.
  (INFOCOM)}, 2018.

\bibitem{karakus2017straggler}
C.~Karakus, Y.~Sun, S.~Diggavi, and W.~Yin, ``Straggler mitigation in
  distributed optimization through data encoding,'' in {\em Proc. Neural Inf.
  Process. Syst. (NeurIPS)}, pp.~5434--5442, 2017.

\bibitem{mcmahan2017communication}
B.~McMahan, E.~Moore, D.~Ramage, S.~Hampson, and B.~A. y~Arcas,
  ``Communication-efficient learning of deep networks from decentralized
  data,'' in {\em Proc. Int. Conf. Artificial Intell. Stat. (AISTATS)},
  vol.~54, pp.~1273--1282, 2017.

\bibitem{smith_nips2017federated}
V.~Smith, C.-K. Chiang, M.~Sanjabi, and A.~S. Talwalkar, ``Federated multi-task
  learning,'' in {\em Proc. Neural Inf. Process. Syst. (NeurIPS)},
  pp.~4424--4434, 2017.

\bibitem{bonawitz2019towards}
K.~Bonawitz, H.~Eichner, W.~Grieskamp, D.~Huba, A.~Ingerman, V.~Ivanov,
  C.~Kiddon, J.~Konecny, S.~Mazzocchi, H.~B. McMahan, {\em et~al.}, ``Towards
  federated learning at scale: System design,'' {\em arXiv preprint
  arXiv:1902.01046}, 2019.

\bibitem{yang2019federated}
Q.~Yang, Y.~Liu, T.~Chen, and Y.~Tong, ``Federated machine learning: Concept
  and applications,'' {\em ACM Trans. Intell. Syst. Technol.}, vol.~10, no.~2,
  p.~12, 2019.

\bibitem{tran2019federatedi}
N.~H. Tran, W.~Bao, A.~Zomaya, M.~N. Nguyen, and C.~S. Hong, ``Federated
  learning over wireless networks: Optimization model design and analysis,''
  {\em Proc. IEEE Conf. Comput. Commun. (INFOCOM)}, 2019.

\bibitem{zhao2018federated}
Y.~Zhao, M.~Li, L.~Lai, N.~Suda, D.~Civin, and V.~Chandra, ``Federated learning
  with non-iid data,'' {\em arXiv preprint arXiv:1806.00582}, 2018.

\bibitem{wang2017giant}
S.~Wang, F.~Roosta-Khorasani, P.~Xu, and M.~W. Mahoney, ``{GIANT}: Globally
  improved approximate newton method for distributed optimization,'' {\em Proc.
  Neural Inf. Process. Syst. (NeurIPS)}, 2018.

\bibitem{li2018near}
S.~Li, S.~M.~M. Kalan, A.~S. Avestimehr, and M.~Soltanolkotabi, ``Near-optimal
  straggler mitigation for distributed gradient methods,'' in {\em Proc. IEEE
  Int. Parallel Distrib. Process. Symp. Workshops}, pp.~857--866, 2018.

\bibitem{blanchard2017machine}
P.~Blanchard, R.~Guerraoui, J.~Stainer, {\em et~al.}, ``Machine learning with
  adversaries: Byzantine tolerant gradient descent,'' in {\em Proc. Neural Inf.
  Process. Syst. (NeurIPS)}, pp.~119--129, 2017.

\bibitem{chen2017distributed}
Y.~Chen, L.~Su, and J.~Xu, ``Distributed statistical machine learning in
  adversarial settings: Byzantine gradient descent,'' {\em Proc. ACM Meas.
  Anal. Comput. Syst.}, vol.~1, no.~2, p.~44, 2017.

\bibitem{nazer2007computation}
B.~Nazer and M.~Gastpar, ``Computation over multiple-access channels,'' {\em
  IEEE Trans. Inf. Theory}, vol.~53, pp.~3498--3516, Oct. 2007.

\bibitem{Goldenbaum_TSP13harnessing}
M.~Goldenbaum, H.~Boche, and S.~Sta{\'n}czak, ``Harnessing interference for
  analog function computation in wireless sensor networks,'' {\em IEEE Trans.
  Signal Process.}, vol.~61, pp.~4893--4906, Oct. 2013.

\bibitem{Kaibin_IoT2018mimo}
G.~Zhu and K.~Huang, ``{MIMO} over-the-air computation for high-mobility
  multi-modal sensing,'' {\em IEEE Internet Things J.}, 2018.

\bibitem{wang2018cooperative}
J.~Wang and G.~Joshi, ``Cooperative {SGD}: A unified framework for the design
  and analysis of communication-efficient {SGD} algorithms,'' {\em arXiv
  preprint arXiv:1808.07576}, 2018.

\bibitem{zhu2018low}
G.~Zhu, Y.~Wang, and K.~Huang, ``Low-latency broadband analog aggregation for
  federated edge learning,'' {\em arXiv preprint arXiv:1812.11494}, 2018.

\bibitem{luo2007approximation}
Z.-Q. Luo, N.~D. Sidiropoulos, P.~Tseng, and S.~Zhang, ``Approximation bounds
  for quadratic optimization with homogeneous quadratic constraints,'' {\em
  SIAM J. Optim.}, vol.~18, no.~1, pp.~1--28, 2007.

\bibitem{shi2016smoothed}
Y.~Shi, J.~Cheng, J.~Zhang, B.~Bai, W.~Chen, and K.~B. Letaief, ``Smoothed
  ${L}_p$-minimization for green {Cloud-RAN} with user admission control,''
  {\em IEEE J. Sel. Areas Commun.}, vol.~34, pp.~1022--1036, Apr. 2016.

\bibitem{Yuanming_cvxsmooth18}
H.~Wang, F.~Zhang, Q.~Wu, Y.~Hu, and Y.~Shi, ``Nonconvex and nonsmooth sparse
  optimization via adaptively iterative reweighted methods,'' {\em
  arXiv:1810.10167}, 2018.

\bibitem{chen2018uniform}
L.~{Chen}, X.~{Qin}, and G.~{Wei}, ``A uniform-forcing transceiver design for
  over-the-air function computation,'' {\em IEEE Wireless Commun. Lett.},
  vol.~7, pp.~942--945, Dec. 2018.

\bibitem{gotoh2017dc}
J.-y. Gotoh, A.~Takeda, and K.~Tono, ``{DC} formulations and algorithms for
  sparse optimization problems,'' {\em Math. Program.}, vol.~169, pp.~141--176,
  May 2018.

\bibitem{fan1951maximum}
K.~Fan, ``Maximum properties and inequalities for the eigenvalues of completely
  continuous operators,'' {\em Proc. Nat. Academy Sci.}, vol.~37, no.~11,
  pp.~760--766, 1951.

\bibitem{krizhevsky2009learning}
A.~Krizhevsky and G.~Hinton, ``Learning multiple layers of features from tiny
  images,'' tech. rep., University of Toronto, 2009.

\bibitem{keskar2016large}
N.~S. Keskar, D.~Mudigere, J.~Nocedal, M.~Smelyanskiy, and P.~T.~P. Tang, ``On
  large-batch training for deep learning: Generalization gap and sharp
  minima,'' in {\em Proc. Int. Conf. Learn. Representations (ICLR)}, 2017.

\bibitem{sidiropoulos2006transmit}
N.~D. Sidiropoulos, T.~N. Davidson, and Z.-Q. Luo, ``Transmit beamforming for
  physical-layer multicasting,'' {\em IEEE Trans. Signal Process.}, vol.~54,
  pp.~2239--2251, Jun. 2006.

\bibitem{Yuanming_TWC2014}
Y.~Shi, J.~Zhang, and K.~B. Letaief, ``Group sparse beamforming for green
  {Cloud-RAN},'' {\em IEEE Trans. Wireless Commun.}, vol.~13, pp.~2809--2823,
  May 2014.

\bibitem{tropp2010computational}
J.~A. Tropp and S.~J. Wright, ``Computational methods for sparse solution of
  linear inverse problems,'' {\em Proc. IEEE}, vol.~98, pp.~948--958, Jun.
  2010.

\bibitem{Romberg_JSTSP16}
M.~A. Davenport and J.~Romberg, ``An overview of low-rank matrix recovery from
  incomplete observations,'' {\em IEEE J. Sel. Topics Signal Process.},
  vol.~10, pp.~608--622, Jun. 2016.

\bibitem{shi2016low}
Y.~Shi, J.~Zhang, and K.~B. Letaief, ``Low-rank matrix completion for
  topological interference management by {R}iemannian pursuit,'' {\em IEEE
  Trans. Wireless Commun.}, vol.~15, pp.~4703--4717, Jul. 2016.

\bibitem{Yuanming_ComMage18}
Y.~Shi, J.~Zhang, W.~Chen, and K.~B. Letaief, ``Generalized sparse and low-rank
  optimization for ultra-dense networks,'' {\em IEEE Commun. Mag.}, vol.~56,
  pp.~42--48, Jun. 2018.

\bibitem{boyd2004convex}
S.~Boyd and L.~Vandenberghe, {\em Convex optimization}.
\newblock Cambridge Univ. Press, 2004.

\bibitem{chartrand2008iteratively}
R.~Chartrand and W.~Yin, ``Iteratively reweighted algorithms for compressive
  sensing,'' in {\em Proc. IEEE Int. Conf. Acoustics Speech Signal Process.
  (ICASSP)}, pp.~3869--3872, 2008.

\bibitem{chen2017admm}
E.~Chen and M.~Tao, ``{ADMM}-based fast algorithm for multi-group multicast
  beamforming in large-scale wireless systems,'' {\em IEEE Trans. Commun.},
  vol.~65, pp.~2685--2698, Jun. 2017.

\bibitem{tao1997convex}
P.~D. Tao and L.~T.~H. An, ``Convex analysis approach to {DC} programming:
  Theory, algorithms and applications,'' {\em Acta Math. Vietnamica}, vol.~22,
  no.~1, pp.~289--355, 1997.

\bibitem{bouboulis2012adaptive}
P.~Bouboulis, K.~Slavakis, and S.~Theodoridis, ``Adaptive learning in complex
  reproducing kernel {H}ilbert spaces employing {W}irtinger's subgradients,''
  {\em IEEE Trans. Neural Netw. Learn. Syst.}, vol.~23, pp.~425--438, Mar.
  2012.

\bibitem{rockafellar2015convex}
R.~T. Rockafellar, {\em Convex analysis}.
\newblock Princeton university press, 2015.

\bibitem{watson1992characterization}
G.~A. Watson, ``Characterization of the subdifferential of some matrix norms,''
  {\em Linear Algebra Appl.}, vol.~170, pp.~33--45, 1992.

\bibitem{lu2017efficient}
C.~Lu and Y.-F. Liu, ``An efficient global algorithm for single-group multicast
  beamforming,'' {\em IEEE Trans. Signal Process.}, vol.~65, pp.~3761--3774,
  Jul. 2017.

\end{thebibliography}

\end{document}